
\documentclass{article}

\usepackage{microtype}
\usepackage{graphicx}
\usepackage{booktabs} 
\usepackage{lipsum}
\usepackage{subcaption}

\usepackage{hyperref}



\usepackage[accepted]{icml2023}

\usepackage{amsmath}
\usepackage{amssymb}
\usepackage{mathtools}
\usepackage{amsthm}
\usepackage{bm}
\usepackage{bbm}

\usepackage[capitalize,noabbrev]{cleveref}

\theoremstyle{plain}
\newtheorem{theorem}{Theorem}[section]
\newtheorem{proposition}[theorem]{Proposition}

\newtheorem{corollary}[theorem]{Corollary}
\theoremstyle{definition}

\theoremstyle{remark}
\newtheorem{remark}[theorem]{Remark}

\DeclareMathOperator*{\argmin}{arg\,min}

\DeclareMathOperator{\ddiv}{div}

\DeclareMathOperator{\proj}{proj}
\DeclareMathOperator{\refl}{refl}


\newcommand{\R}{\mathbb{R}}
\newcommand{\Z}{\mathbb{Z}}

\newcommand{\E}{\mathbb{E}}
\newcommand{\paren}[1]{\left(#1\right)}

\newcommand{\sqbrac}[1]{\left[#1\right]}

\newcommand{\norm}[1]{\left\|#1\right\|}
\newcommand{\grad}{\nabla}
\newcommand{\parderiv}[2]{\frac{\partial #1}{\partial #2}}
\newcommand{\inn}[1]{\left\langle#1\right\rangle}
\renewcommand{\vec}{\mathbf}

\newcommand{\dd}{\mathrm{d}}

\newcommand\blfootnote[1]{%
  \begingroup
  \renewcommand\thefootnote{}\footnote{#1}%
  \addtocounter{footnote}{-1}%
  \endgroup
}

\usepackage[textsize=tiny]{todonotes}

\icmltitlerunning{Reflected Diffusion Models}

\begin{document}

\twocolumn[
\icmltitle{Reflected Diffusion Models}



\icmlsetsymbol{equal}{*}

\begin{icmlauthorlist}
\icmlauthor{Aaron Lou}{stanford}
\icmlauthor{Stefano Ermon}{stanford}
\end{icmlauthorlist}

\icmlaffiliation{stanford}{Department of Computer Science, Stanford University}

\icmlcorrespondingauthor{Aaron Lou}{aaronlou@stanford.edu}

\icmlkeywords{Diffusion Models}

\vskip 0.3in
]



\printAffiliationsAndNotice{}  

\begin{abstract}
    Score-based diffusion models learn to reverse a stochastic differential equation that maps data to noise. However, for complex tasks, numerical error can compound and result in highly unnatural samples. Previous work mitigates this drift with thresholding, which projects to the natural data domain (such as pixel space for images) after each diffusion step, but this leads to a mismatch between the training and generative processes. To incorporate data constraints in a principled manner, we present Reflected Diffusion Models, which instead reverse a reflected stochastic differential equation evolving on the support of the data. Our approach learns the perturbed score function through a generalized score matching loss and extends key components of standard diffusion models including diffusion guidance, likelihood-based training, and ODE sampling. We also bridge the theoretical gap with thresholding: such schemes are just discretizations of reflected SDEs. On standard image benchmarks, our method is competitive with or surpasses the state of the art without architectural modifications and, for classifier-free guidance, our approach enables fast exact sampling with ODEs and produces more faithful samples under high guidance weight. 

\vspace{-0.6cm}
\end{abstract}

\begin{figure*}[ht!]
    \centering
    \includegraphics[width=0.95\textwidth]{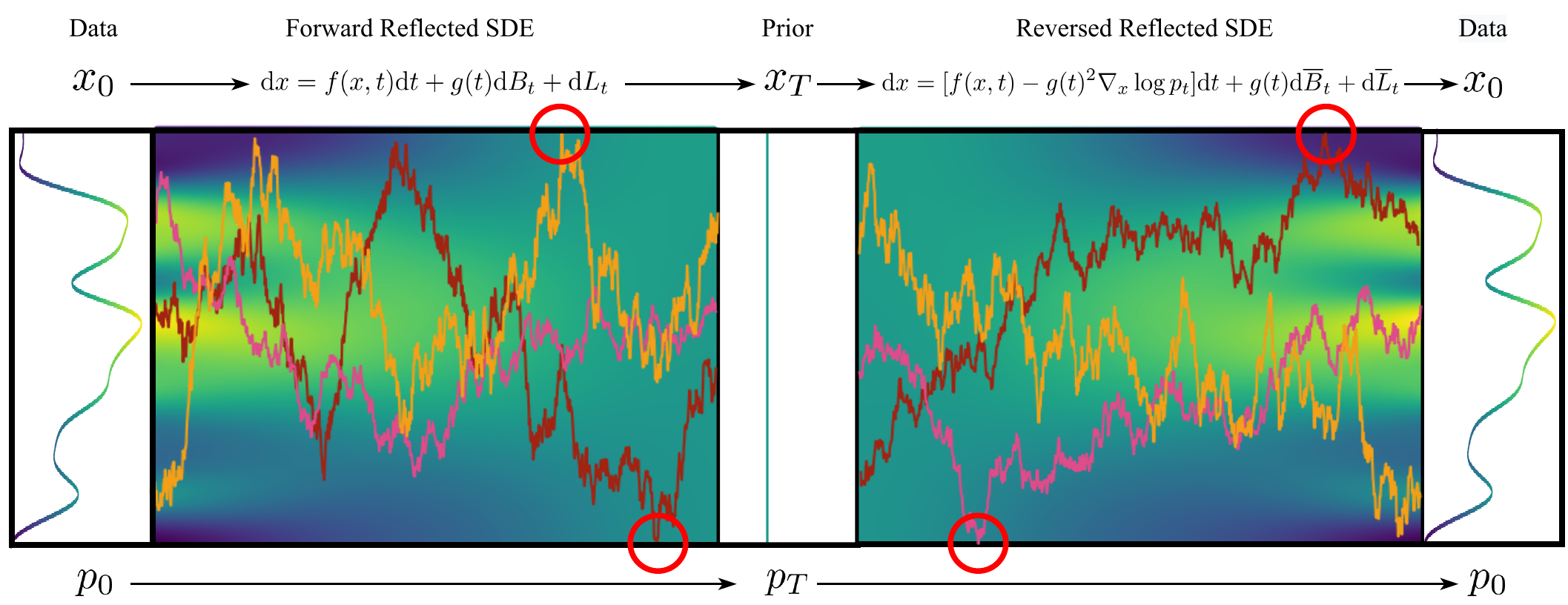}
    \caption{\textbf{Overview of Reflected Diffusion Models.} We map a data distribution $p_0$ supported on $\Omega$ to the prior distribution $p_T$ through a reflected stochastic differential equation (Section \ref{sec:method:rsde}). Whenever a Brownian trajectory hits $\partial \Omega$, it is reflected back in instead of escaping (circled in red), so $p_t$ is supported on $\Omega$ for all $t$. We can recover $p_0$ from $p_T$ with a reversed reflected stochastic differential equation (Section \ref{sec:method:reverse}) by learning the Stein score $\grad_x \log p_t$ (Section \ref{sec:scorematching}). Our generative model is guaranteed to be constrained in $\Omega$.
    }\label{fig:main}
\end{figure*}

\section{Introduction}

Originally introduced in \citet{SohlDickstein2015DeepUL} and later augmented in \citet{song2019generative,Ho2020DenoisingDP, Song2020ScoreBasedGM}, diffusion models have quickly become one of the most ubiquitous deep generative models, with applications in many domains including images \citep{Dhariwal2021DiffusionMB}, natural language \citep{Li2022DiffusionLMIC}, and molecule generation \citep{Xu2022GeoDiffAG}. Additionally, their stability and scabality have enabled the deployment of large text-to-image systems \citep{Ramesh2022HierarchicalTI}.\blfootnote{Code Link: \href{https://github.com/louaaron/Reflected-Diffusion/}{https://github.com/louaaron/Reflected-Diffusion/}}

Diffusion models learn to reverse a stochastic process that maps data to noise, but, as a result of inherent approximation error of both the SDE discretization and score matching, they often follow incorrect trajectories. This behavior compounds error, so models can diverge and generate highly unnatural samples on more complex tasks (see for instance Figure \ref{fig:noclip}). To mitigate this degeneration, many diffusion models modify the sampling process by projecting to the support of the data after each diffusion step \citep{Ho2020DenoisingDP, Li2022DiffusionLMIC}, a technique known as thresholding. This incorporates the known constraints of the data distribution, stabilizing sampling and avoiding divergent behavior. Notably, this oft-overlooked detail underlies many pixel-based image diffusion models \citep{Ho2020DenoisingDP, Dhariwal2021DiffusionMB} (appearing as a $[0, 255]$ clipping function at each diffusion step) and is essential for text-to-image generation \citep{Saharia2022PhotorealisticTD}. Although thresholding avoids failure, it is theoretically unprincipled because it leads to a mismatch between the training and generative processes. Furthermore, this mismatch can introduce artifacts such as oversaturation \citep{Ho2022ClassifierFreeDG} that necessitate further modifications \citep{Saharia2022PhotorealisticTD}.

In this work, we present Reflected Diffusion Models, a class of diffusion models that, by design, respects the known support of the data distribution. Unlike standard diffusion models, which perturbs the data density with Brownian motion, our method evolves the distribution with reflected Brownian motion that always stays within the boundary. We then parameterize the reversed diffusion process with the scores of the perturbed density, which we learn using a new score matching method on bounded domains. The resulting generative model is a reflected SDE that automatically incorporates the data constraints without altering the generative process. We provide an overview of our method in Figure \ref{fig:main}.

Our proposed methodology has several merits:

\textbf{Scales to high dimensions.} To learn the score function on a general bounded domain, we introduce constrained denoising score matching (CDSM). Unlike previous methods \cite{Hyvrinen2007SomeEO}, CDSM scales to high dimensions, and we develop an algorithm for fast computation. Since reflection operations are negligible compared to neural network computation, our training and inference times are effectively equivalent to those of standard diffusion models.

\textbf{Key features transfer over.} We show that ODE sampling \citep{Song2020ScoreBasedGM}, diffusion guidance \citep{Ho2022ClassifierFreeDG}, and maximum likelihood bounds \citep{Song2021MaximumLT} extend to the reflected setting. As such, our method can be modularly applied to preexisting diffusion model systems.

\textbf{Justifying and correcting previous methods.} We draw connections with the thresholding methods used in pixel-space diffusion models \citep{Saharia2022PhotorealisticTD}. These methods all sample from a reflected stochastic differential equation despite being trained on a standard diffusion process. Correctly training with our CDSM loss avoids pathological behavior and allows for equivalent ODE sampling.

\textbf{Broad Applicability.} We apply our method to high-dimensional simplices (e.g. class probabilities) and hypercubes (e.g. images). Using a synthetic example, we show that our method is the a simplex diffusion model that scales to high dimensions. On common image generation benchmarks, our results are competitive with or surpass the current state of the art. In particular, on unconditional CIFAR-10 generation \citep{Krizhevsky2009LearningML}, we achieve a state of the art Inception Score of 10.46 and a comparable FID score of 2.72. For likelihood estimation, our method achieves a second best score of 2.68 and 3.74 bits per dimension on CIFAR-10 and ImageNet32 \citep{Oord2016PixelRN} without relying on either importance sampling or learned noise schedules.
\section{Background}

To introduce diffusion models (in the continuous time formalism of \cite{Song2020ScoreBasedGM}), we first transform a data density $q_0$ on $\R^d$ by applying a ``forward" diffusion process. This takes the form of perturbing points $\vec{x}_0 \sim q_0$ with an SDE with a fixed drift coefficient $\vec{f}: \R^d \times \R \to \R^d$, diffusion coefficient $g: \R \to \R$, and Brownian motion $\vec{B}_t$:
\begin{equation}\label{eqn:forwardsde}
    \dd \vec{x}_t = \vec{f}(\vec{x}_t, t) \dd t + g(t) \dd \vec{B}_t
\end{equation}
The resulting family of time varied distributions $x_t \sim q_t$ approaches a known prior distribution $q_T \approx \mathcal{N}(0, \sigma_T^2 I)$. This density evolution process can be reversed by perturbing samples $x_T \sim q_T$ with a reversed SDE \citep{Anderson1982ReversetimeDE}:
\begin{equation}\label{eqn:backwardsde}
    \dd \vec{x}_t = (\vec{f}(\vec{x}_t, t) - g^2(t) \grad_x \log q_t(\vec{x}_t)) \dd t + g(t) \dd \overline{\vec{B}}_t
\end{equation}
where $\overline{\vec{B}}_t$ is time reversed Brownian motion. Diffusion models approximate this reverse process by learning $\grad_x \log q_t$, known as the score function, through a $\lambda$-weighted score matching loss:
\begin{equation}\label{eqn:weightedscorematching}
    \E_{t, \vec{x}_t \sim q_t} \lambda_t \norm{\vec{s}_\theta(\vec{x}_t, t) - \grad_x \log q_t(\vec{x}_t)}^2
\end{equation}
which most commonly takes the form of the more tractable denoising score matching loss \citep{Vincent2011ACB}:
\begin{equation}\label{eqn:weighteddenoisescorematching}
    \hspace*{-1.75mm}\E_{t, \vec{x}_0 \sim q_0, \vec{x}_t \sim q_t (\cdot | \vec{x}_0)}\lambda_t \norm{\vec{s}_\theta(\vec{x}_t, t) - \grad_x \log q_t(\vec{x}_t | \vec{x}_0)}^2
\end{equation}
Here, $q_t(\vec{x}_t | \vec{x}_0)$ is the transition kernel induced by the SDE in Equation \ref{eqn:forwardsde}. With a learned score $\vec{s}_\theta(\vec{x}, t) \approx \grad_x \log q_t$, one can define a generative model by first sampling $\vec{y}_T \sim \mathcal{N}(0, \sigma_T^2 I)$ and then solving the reverse SDE
\begin{equation}\label{eqn:diffmodelsde}
    \dd \vec{y}_t = (\vec{f}(\vec{y}_t, t) - g(t)^2 \vec{s}_\theta(\vec{y}_t, t)) \dd t + g(t) \dd \overline{\vec{B}}_t
\end{equation}
from time $T$ to $0$, giving an approximate sample from $q_0$.

Diffusion models enjoy many special properties. For example, for certain $\lambda_t$, Equation \ref{eqn:weighteddenoisescorematching} can be reformulated as an ELBO using Girsanov's theorem \citep{Song2021MaximumLT,Kingma2021VariationalDM, Huang2021AVP}, allowing for maximum likelihood training. Furthermore, one can derive an equivalent Neural ODE that can be used for sampling and exact likelihood evaluation \citep{Chen2018NeuralOD}.

\textbf{Guidance.} One can also control the diffusion model to sample from a synthetic distribution $\tilde{q}_t(\vec{x}_t | c) \propto q_t(c | \vec{x_t})^w q_t(\vec{x}_t)$. Here, $c$ is a desired condition such as a class or text description, and interpolating the guidance weight $w$ controls the fidelity and diversity of the samples. This requires the score
\begin{equation}\label{eqn:bayesscore}
    \begin{gathered}
        \grad_x \log \tilde{q}_t(\vec{x}_t | c) = w\grad_x \log q_t(c | \vec{x}_t) + \grad_x \log q_t(\vec{x}_t)
    \end{gathered}
\end{equation}
which can be learned $\tilde{\vec{s}}_\theta(\vec{x}_t, t, c) \approx \grad_x \log \tilde{q}_t(\vec{x}_t | c)$ without requiring explicit training on $\tilde{q}_t$. For example, classifier guided methods \citep{Song2020ScoreBasedGM, Dhariwal2021DiffusionMB} combine a pretrained score function $\vec{s}_\theta(\vec{x}_t, t)$ and classifier $q_t(c | \vec{x}_t)$:
\begin{equation}\label{eqn:classguid}
    \tilde{\vec{s}}_\theta(\vec{x}_t, t, c) := w \grad_x \log q_t(c | \vec{x}_t) + \vec{s}_\theta(\vec{x}_t, t)
\end{equation}
and classifier-free guidance methods \citep{Ho2022ClassifierFreeDG} uses a $c$-conditioned score function and an implicit Bayes classifier $q_t(c | \vec{x}_t) = \frac{q_t(\vec{x}_t | c)}{q_t(\vec{x}_t) q_t(c)}$
\begin{equation}\label{eqn:classfreeguid}
    \tilde{\vec{s}}_\theta(\vec{x}_t, t, c) := (w + 1) \vec{s}_\theta(\vec{x}_t, t, c) - w \vec{s}_\theta(\vec{x}_t, t)
\end{equation}
\textbf{Thresholding.} However, since $\vec{s}_\theta$ is not a perfect score function, there is a mismatch between the modeled backward process and the true forward process. Thus, diffusion models can push a sample to areas where $q_t(\vec{x}_t)$ is small, which creates a negative feedback loop since score matching struggles in low probability areas \citep{Song2019GenerativeMB, Koehler2022StatisticalEO}. This causes the sampling process to diverge and commonly occurs for more complex tasks, especially those involving diffusion guidance.

To combat this, many previous works alter the diffusion sampling procedure using thresholding \citep{Saharia2022PhotorealisticTD, Li2022DiffusionLMIC}, which stabilizes the sampling process with inductive biases from the data. In particular, thresholding applies an operator $\mathcal{O}$ that projects back to the data domain $\Omega$ during each discretized SDE step:
\begin{equation}\label{eqn:thresholding}
    \vec{y}_{t - \Delta t} = \mathcal{O}(\vec{y}_t - \sqbrac{\vec{f}(\overline{\vec{y}}_t, t) - g(t)^2 \vec{s}_\theta(\vec{y}_t, t)} \Delta t) + g(t) \vec{B}_{\Delta t}
\end{equation}
For the case of images, $\mathcal{O}$ can be static thresholding, which clips each dimension to the pixel range $[0, 255]$, and dynamic thresholding, which first normalizes all pixels by the $p$-th percentile pixel before clipping \citep{Saharia2022PhotorealisticTD}.

Thresholding alleviates divergent sampling but comes with considerable downsides. For example, it breaks the theoretical setup since the generative model no longer approximates the reverse diffusion process. This mismatch induces artifacts during sampling and precludes the use of ODE sampling \citep{Song2020ScoreBasedGM}.

\section{Reflected Diffusion Models}\label{sec:method}

In this section, we present Reflected Diffusion Models. These define a generative model on a data domain $\Omega$ (assumed to be connected and compact with nonempty interior and uniform Hausdorff dimension) which outer-bounds the support of the data distribution $p_0$. Our method retains the theoretical underpinnings of diffusion models while incorporating inductive biases from thresholding. We highlight the core mechanisms in Figure \ref{fig:main}.

\subsection{Reflected Stochastic Differential Equations}\label{sec:method:rsde}
To model diffusion processes on a compact domain $\Omega$, we use reflected SDEs. For ease of presentation, we only give an intuitive definition of reflected SDEs and simplify so that $g$ is scalar and $\vec{L}_t$ reflects in the normal direction. In Appendix \ref{sec:app:theory:rsde}, we provide a more rigorous mathematical definition and generalize to matrix diffusion coefficients and oblique reflections. For a full introduction, we recommend the readers consult a monograph such as \citet{Pilipenko2014AnIT}.

Our reflected SDEs perturb an initial datum $\vec{x}_0 \sim p_0$ and are parameterized by a drift coefficient $\vec{f}: \Omega \times \R \to \R^d$ and diffusion coefficient $g: \R \to \R$:
\begin{equation}\label{eqn:reflsde}
    \dd\vec{x}_t = \vec{f}(\vec{x}_t, t)\dd t + g(t) \dd \vec{B}_t + \dd \vec{L}_t
\end{equation}
The first two terms on the right hand side of Equation \ref{eqn:reflsde} are exactly those of Equation \ref{eqn:forwardsde}, showing that our reflected SDE behaves like a regular SDE in the interior of $\Omega$. $\vec{L}_t$ is the additional boundary constraint that, intuitively, forces the particle to stay inside $\Omega$. When $x_t$ hits $\partial \Omega$, $\vec{L}_t$ neutralizes the outward normal-pointing component.

This reflected SDE has a unique strong solution as long as $\vec{f}$ and $g$ are Lipschitz in state and time and $\Omega$ satisfies the uniform exterior sphere condition \citep[Theorem 2.5.4]{Pilipenko2014AnIT}, which ensures that $\partial \Omega$ is sufficiently regular. In particular, the uniform exterior sphere condition holds true when $\partial \Omega$ is smooth and even when $\Omega$ is a convex polytope.

\subsection{Density Evolution and Time Reversal}\label{sec:method:reverse}

When we perturb $p_0$ with the reflected SDE in Equation \ref{eqn:reflsde}, our density evolves according to the Fokker-Planck equation with Neumann boundary condition \citep{Schuss2013BrownianDA}:
\vspace{-1.5mm}
\begin{equation}\label{eqn:neumfp}
    \begin{gathered}
        \parderiv{}{t}p_t = \ddiv(- p_t\vec{f} + \frac{g^2}{2} \grad_x p_t)\\
        (p_t\vec{f} - \frac{g^2}{2} \grad_x p_t) \cdot \vec{n} = 0, \vec{x} \in \partial \Omega, \vec{n} \text{ normal}, t > 0
    \end{gathered}
    \vspace{-1mm}
\end{equation}
In addition to allowing us to characterize the limiting density $p_T$, this induces a reversed reflected stochastic differential equation \citep{Cattiaux1988TimeRO, Williams1988OnTO}:
\begin{equation}\label{eqn:reversereflsde}
    \begin{gathered}
        \dd\vec{x}_t = (\vec{f}(\vec{x}_t, t) - g(t)^2 \grad_x \log p_t(\vec{x}_t)) \dd t\\
        + g(t) \dd \overline{\vec{B}}_t + \dd \overline{\vec{L}}_t
    \end{gathered}
\end{equation}
where $\overline{\vec{L}}_t$ is the reversed boundary condition. For our case, $\overline{\vec{L}}_t$ also reflects in the normal direction.
\begin{remark}
    The reversed reflected SDE closely resembles the reversed standard SDE given in Equation \ref{eqn:backwardsde}. On one hand, this is natural because local dynamics match: when $\Omega = \R^d$, $\vec{L}_t$ disappears since $\vec{x}_t$ can never hit $\partial \R^d = \emptyset$. On the other hand, it is surprising that we can reverse a reflected diffusion process with another reflected diffusion process, something that does not hold in the discrete time case.
\end{remark}

\subsection{Reflected SDEs in Practice}\label{sec:method:examples}

In our experiments, $\Omega$ will be either the unit cube $C_d := \{\vec{x} \in \R^d : 0 \le x_i \le 1\}$ or the unit simplex, which is given by $\Delta_d := \{\vec{x} \in \R^d: \sum_{i = 1}^d x_i = 1, x_i \ge 0\}$. We often find it more convenient to work with the projected simplex $\overline{\Delta}_d := \{\vec{x} \in \R^d: \sum_{i = 1}^{d - 1} x_i \le 1, x_i \ge 0\}$ as it is bounded in $\R^d$ instead of in a hyperplane.

We will diffuse with the  Reflected Variance Exploding SDE (RVE SDE), a generalization of the Variance-Exploding SDE introduced in \citet{Song2020ScoreBasedGM}. A RVE SDE is parameterized by $\sigma_0 \ll \sigma_1$ and is defined for $t \in [0, 1]$ by
\begin{equation}\label{eqn:reflectve}
    \dd \vec{x}_t = \overline{\sigma}_t \dd \vec{B}_t + \dd \vec{L}_t
\end{equation}
where $\overline{\sigma}_t := \sigma_0^{1 - t} \sigma_1^t \sqrt{2 \log\paren{\frac{\sigma_1}{\sigma_0}}}$. The reverse is
\begin{equation}
    \dd \vec{x}_t = -\overline{\sigma}_t^2 \grad_x \log p_t(\vec{x}_t) \dd t + \overline{\sigma}_t \dd \overline{\vec{B}}_t + \dd \overline{\vec{L}}_t
\end{equation}
Note that the RVE SDE corresponds to a time dilated version of reflected Brownian motion: time $t$ of a RVE SDE corresponds to time $\sigma_t$ of reflected Brownian motion, where $\sigma_t := \sigma_0^{1 - t}\sigma_1^t$. As a result of Equation \ref{eqn:neumfp}, $p_0$ evolves under a heat equation with Neumann boundary conditions:
\begin{equation}\label{eqn:neumhe}
    \begin{gathered}
        \parderiv{}{t}p_t = \frac{g(t)^2}{2} \Delta_x p_t \quad \grad_x p_t \cdot \vec{n} = 0 \text{ on } \partial \Omega
    \end{gathered}
\end{equation}
Note that $p_1$ becomes a uniform density over $\Omega$ for large enough $\sigma_1$. To see this, we can draw intuition from physics: heat homogenizes in a closed container.

\section{Score Matching on Bounded Domains}\label{sec:scorematching}

While the reflected SDE framework provides a nice theoretical pathway to construct a reflected diffusion model, it requires one to learn the score function $\vec{s}_\theta \approx \grad_x \log p_t$ on $\Omega$. We minimize the constrained score matching loss:
\begin{equation}\label{eqn:bsm}
    \frac{1}{2} \E_{\vec{x} \sim p}^\Omega \norm{\vec{s}_\theta(\vec{x}) - \grad_x \log p(\vec{x})}^2
\end{equation}
where we omit time-dependence for presentation purposes. Furthermore, $\E^\Omega$ indicates the domain of the expectation (as opposed to $\E$ which is an integral over $\R^d$). This is because $p$ can be discontinuous at $\partial \Omega$ (since it is $0$ outside of $\Omega$ and can be nonzero on $\partial \Omega$), so constraining the integral ensure regularity properties used for theorems (such as Stokes').

In this section, we review previous methods for score matching on bounded domains, discuss their fundamental limitations, and propose constrained denoising score matching to overcome these difficulties. Additionally, for the RVE SDE introduced in Section \ref{sec:method:examples}, we show how to quickly compute the score matching training objective.

\subsection{Pitfalls of Implicit Score Matching}

One may hope to draw inspiration from the standard paradigm, which transforms the score matching integral
\begin{equation}\label{eqn:sm}
    \frac{1}{2}\E_{\vec{x} \sim q} \norm{\vec{s}_\theta(\vec{x}) - \grad_x \log q(\vec{x})}^2
\end{equation}
into the implicit score matching loss \citep{Hyvrinen2005EstimationON}:
\begin{equation}
    \E_{\vec{x} \sim q} \sqbrac{\ddiv(\vec{s}_\theta)(\vec{x}) + \frac{1}{2}\norm{\vec{s}_\theta(\vec{x})}^2}
\end{equation}
This removes the intractable $\grad_x \log q(\vec{x})$, allowing for estimation using Monte Carlo sampling. However, the derivation requires the use of Stokes' theorem; applying Stokes' theorem to Equation \ref{eqn:bsm} would instead result in
\begin{equation}
    \begin{gathered}
        \E_{\vec{x} \sim p}^{\Omega} \sqbrac{\ddiv(\vec{s}_\theta)(\vec{x}) + \frac{1}{2}\norm{s_\theta(\vec{x})}^2}\\
        + \int_{\partial \Omega} p(\vec{x})\inn{\vec{s}_\theta(\vec{x}), \vec{n}(\vec{x})} \dd \vec{x}
    \end{gathered}
\end{equation}
where $\vec{n}(\vec{x})$ is the interior pointing normal vector. Unlike the case of $\Omega = \R^d$, where the second term disappears since $\partial \R^d = \emptyset$, this result is computationally intractable. Thus, previous work instead proposes to re-weight the loss function with a nonnegative function $h$ that vanishes on the boundary \citep{Hyvrinen2007SomeEO, Yu2020GeneralizedSM}, minimizing
\begin{equation}\label{eqn:bsmreg}
    \frac{1}{2}\E_{\vec{x} \sim p}^{\Omega} h(\vec{x})\norm{\vec{s}_\theta(\vec{x}) - \grad_x \log p(\vec{x})}^2
\end{equation}
Since $h$ vanishes on $\partial \Omega$, we can cleanly apply Stokes' theorem and derive a result without a boundary term, giving an implicit score matching loss:
\begin{equation}\label{eqn:bsmregimp}
    \E_{\vec{x} \sim p}^{\Omega}\sqbrac{\ddiv(h \cdot \vec{s}_\theta)(\vec{x}) + \frac{h(\vec{x})}{2}\norm{\vec{s}_\theta(\vec{x})}^2}
\end{equation}
However, this formulation is not suitable for high dimensions, even with fast numerical algorithms for the divergence operator \citep{Hutchinson1989ASE, Song2019SlicedSM}. This is because the loss is downweighted near the boundaries, so, for a fixed budget, the error can become unbounded as $x \to \partial \Omega$. For high dimensions, the space near the boundary becomes an increasingly larger proportion of the total volume\footnote{Consider the case when $\Omega = [0, 1]^d$. For large $d$, almost all the mass is close to the boundary.}, which greatly hampers the sample efficiency of the loss.

\subsection{Constrained Denoising Score Matching}

Inspired by the empirical success of denoising score matching \citep{Vincent2011ACB, Song2019GenerativeMB}, we present constrained denoising score matching (CDSM). Crucially, denoising score matching, unlike implicit score matching, can directly generalize to bounded domains due to how it handles discontinuities. This means that, unlike previous methods for constrained score matching, the derivation transfers smoothly. The core mechanism is presented in the following proposition, which we prove in Appendix \ref{sec:app:theory:cdsm}.
\begin{proposition}\label{prop:cdsm}
    Suppose that we perturb an $\Omega$-supported density $a(\vec{x})$ with noise $b(\vec{x} | \cdot)$ (also supported on $\Omega$) to get a new density $b(\vec{x}) := \int_\Omega a(\vec{y}) b(\vec{x} | \vec{y}) \dd \vec{y}$. Then, under suitable regularity conditions for the smoothness of $a$ and $b$, the score matching loss for $b$:
    \begin{equation}\label{eqn:pertbsm}
        \frac{1}{2} \E_{\vec{x} \sim b}^\Omega \norm{\vec{s}_\theta(\vec{x}) - \grad_x \log b(\vec{x})}^2
    \end{equation}
    is equal (up to a constant factor that does not depend on $\vec{s}$) to the CSDM loss:
    \begin{equation}\label{eqn:pertcdsm}
        \frac{1}{2} \E_{\vec{x}_0 \sim a}^\Omega \E_{\vec{x} \sim b(\cdot | \vec{x}_0)}^\Omega \norm{\vec{s}_\theta(\vec{x}) - \grad_x \log b(\vec{x} | \vec{x}_0)}^2
    \end{equation}
\end{proposition}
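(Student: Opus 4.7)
The plan is to mirror Vincent's classical derivation of denoising score matching, carefully tracking that every step stays within $\Omega$ and that no integration by parts is invoked — the latter being precisely the reason the analogous implicit score matching argument fails on bounded domains.

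First I would expand both loss functionals into three terms: a quadratic $\|\vec{s}_\theta\|^2$ term, a cross term, and a $\|\grad_x \log\|^2$ term. The $\|\vec{s}_\theta\|^2$ terms agree because integrating out $\vec{x}_0$ against $a$ recovers $b$ from $b(\cdot\mid \vec{x}_0)$:
\[
\E^\Omega_{\vec{x}_0 \sim a}\E^\Omega_{\vec{x} \sim b(\cdot\mid\vec{x}_0)}\|\vec{s}_\theta(\vec{x})\|^2 = \E^\Omega_{\vec{x} \sim b}\|\vec{s}_\theta(\vec{x})\|^2.
\]
Both $\|\grad_x \log\|^2$ terms are independent of $\theta$ and thus absorbed into the constant. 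So the entire task reduces to matching the two cross terms.

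For the cross term in \eqref{eqn:pertbsm}, I would rewrite
\[
\E^\Omega_{\vec{x} \sim b}\inn{\vec{s}_\theta(\vec{x}), \grad_x \log b(\vec{x})} = \int_\Omega \inn{\vec{s}_\theta(\vec{x}), \grad_x b(\vec{x})} \dd \vec{x},
\]
substitute the marginalization $b(\vec{x}) = \int_\Omega a(\vec{y}) b(\vec{x}\mid\vec{y}) \dd \vec{y}$, differentiate under the integral sign using the regularity hypotheses on $b(\vec{x}\mid\vec{y})$, apply Fubini to exchange the $\vec{x}$ and $\vec{y}$ integrals, and finally use $\grad_x b(\vec{x}\mid\vec{y}) = b(\vec{x}\mid\vec{y})\grad_x \log b(\vec{x}\mid\vec{y})$ to reintroduce the logarithmic gradient. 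The outcome is
\[
\int_\Omega \int_\Omega a(\vec{y}) b(\vec{x}\mid\vec{y}) \inn{\vec{s}_\theta(\vec{x}), \grad_x \log b(\vec{x}\mid\vec{y})} \dd \vec{y} \dd \vec{x},
\]
which is exactly the cross term in \eqref{eqn:pertcdsm}. Adding back the constants finishes the proof.

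The main obstacle is pinning down the "suitable regularity conditions." To justify differentiation under the integral and the application of Fubini, one needs integrability of $a(\vec{y})|\grad_x b(\vec{x}\mid\vec{y})|$ on $\Omega \times \Omega$ and joint measurability with mild smoothness of $b(\vec{x}\mid\vec{y})$ in $\vec{x}$; these should hold for the RVE kernel obtained from the reflected heat equation \eqref{eqn:neumhe}. Crucially, the entire derivation avoids Stokes' theorem, so the boundary integral that obstructed the implicit-score-matching approach of \eqref{eqn:bsmreg}--\eqref{eqn:bsmregimp} never appears. This is what allows CDSM to handle distributions that are discontinuous across $\partial\Omega$, and it is the conceptual payoff that the proof should make visible.
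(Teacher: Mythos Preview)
Your proposal is correct and follows essentially the same route as the paper's proof: expand both losses, observe that only the cross terms need to be matched (the $\|\vec{s}_\theta\|^2$ terms agree by marginalization and the $\|\grad_x\log\|^2$ terms are $\theta$-free constants), and then verify the cross-term identity by writing $\grad_x \log b = \grad_x b / b$, substituting the convolution, differentiating under the integral, and re-expressing via $\grad_x \log b(\cdot\mid\vec{x}_0)$. The paper presents the cross-term computation line by line and notes, as you do, that this is Vincent's argument with the domain of integration replaced by $\Omega$ and no use of Stokes' theorem.
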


With the constrained denoising score matching loss, we are then able to define a training objective for reflected diffusion models. In particular, since $p_t(\vec{x})$ is a by definition perturbed density of $p_0(\vec{x})$ with transition kernel $p_t(\vec{x} | \cdot)$, the weighted score score matching loss directly becomes:
\begin{equation}\label{eqn:wcdsm}
    \E_{t, \vec{x}_0 \sim p_0, \vec{x}_t \sim p_t (\cdot | \vec{x}_0)}^\Omega \lambda_t \norm{\vec{s}_\theta(\vec{x}_t, t) - \grad_x \log p_t(\vec{x}_t | \vec{x}_0)}^2
\end{equation}
For our reflected SDE, we will set $\lambda_t \propto g(t)^2$, mirroring previous work and minimizing variance during optimization. Interestingly, as we prove in Section \ref{sec:likelihood}, this corresponds to an ELBO loss when we reverse a RVE SDE.

\subsection{Scaling Score Matching Computation}\label{sec:sm:scaling}

\begin{figure*}[h]
    \centering
    \begin{subfigure}[b]{0.24\textwidth}
        \includegraphics[width=\textwidth]{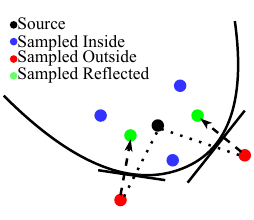}
        \caption*{(i)}
    \end{subfigure}
    \hfill
    \begin{subfigure}[b]{0.24\textwidth}
        \includegraphics[width=\textwidth]{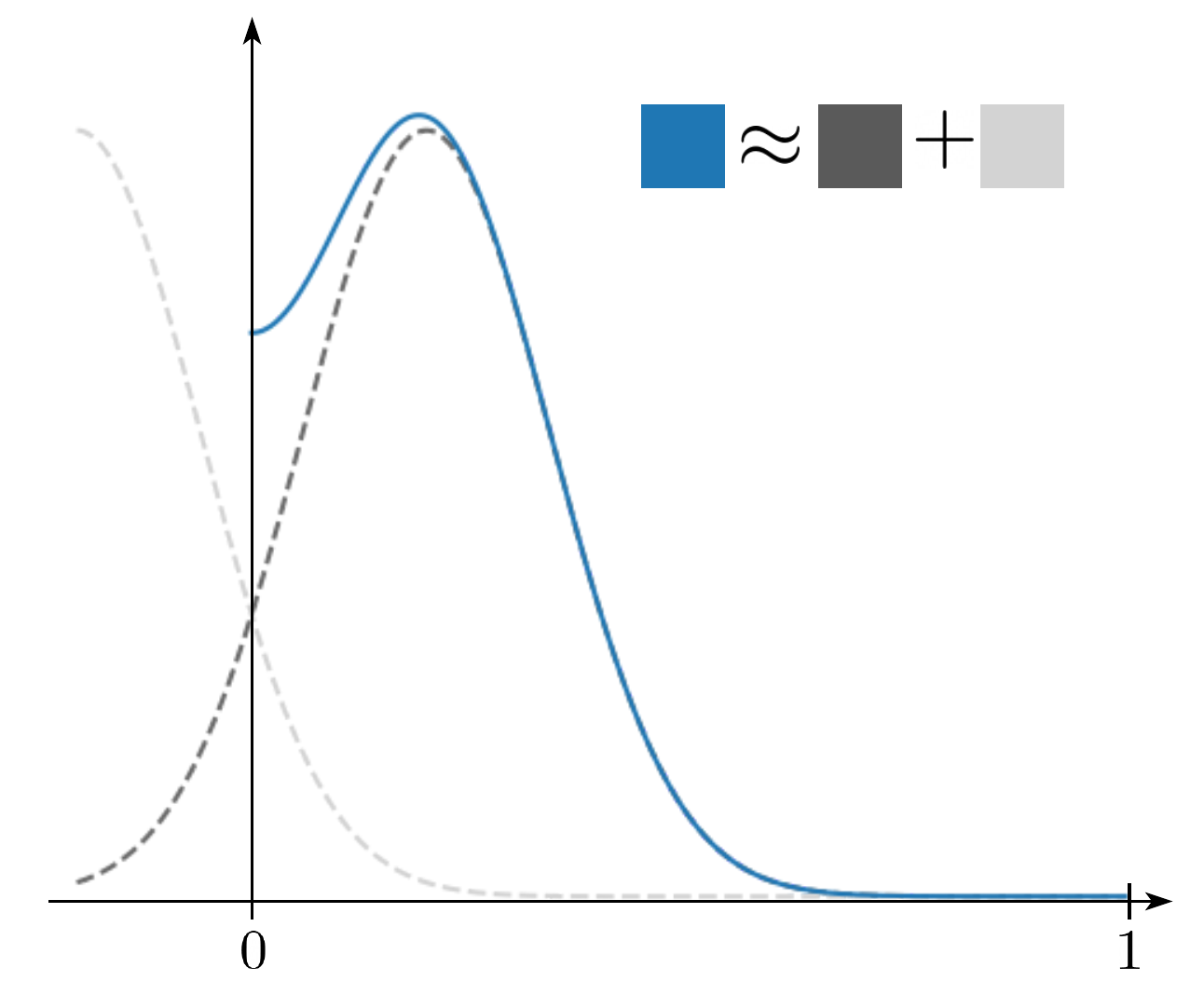}
        \caption*{(ii)}
    \end{subfigure}
    \hfill
    \begin{subfigure}[b]{0.24\textwidth}
        \includegraphics[width=\textwidth]{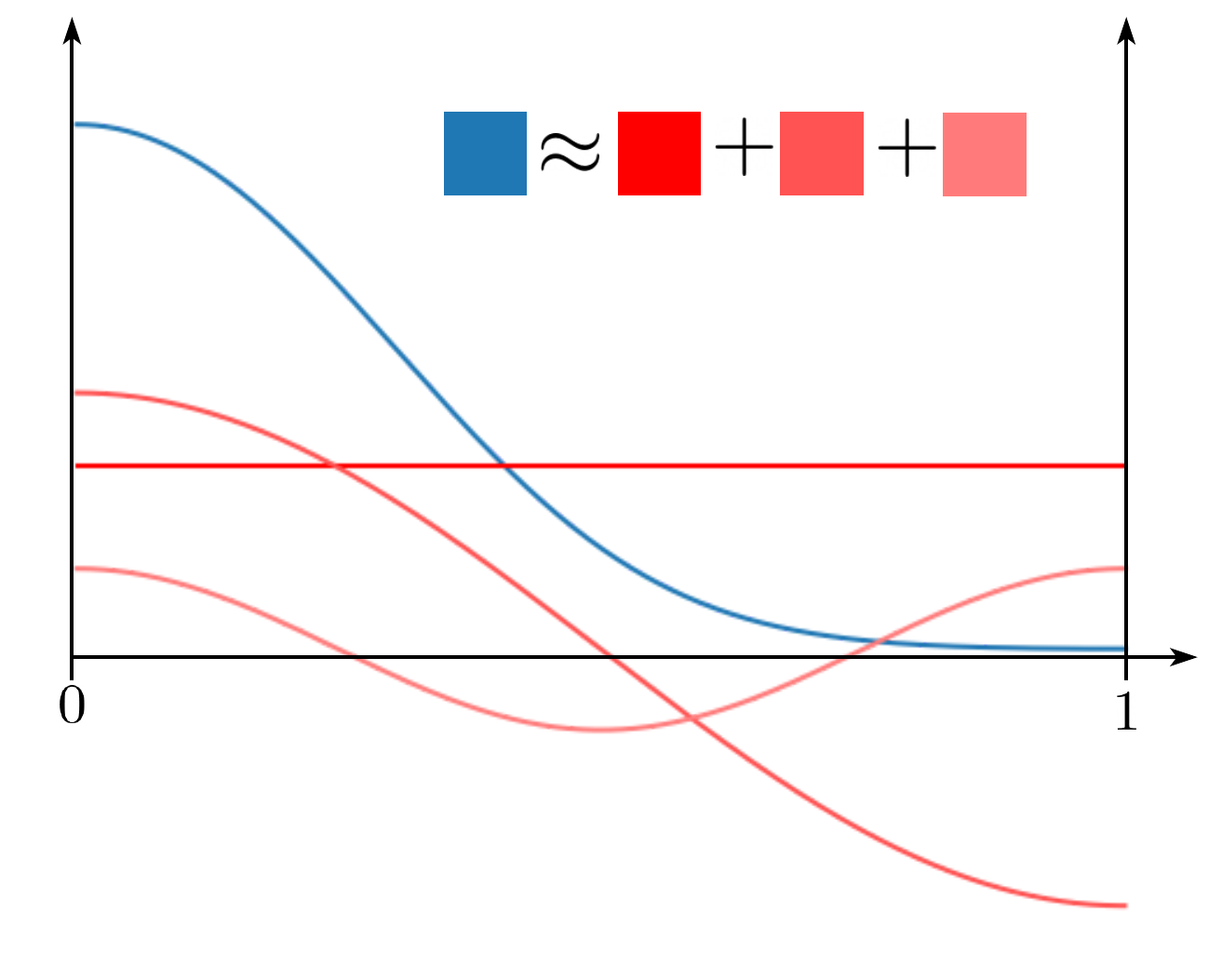}
        \caption*{(iii)}
    \end{subfigure}
    \hfill
    \begin{subfigure}[b]{0.24\textwidth}
        \includegraphics[width=\textwidth]{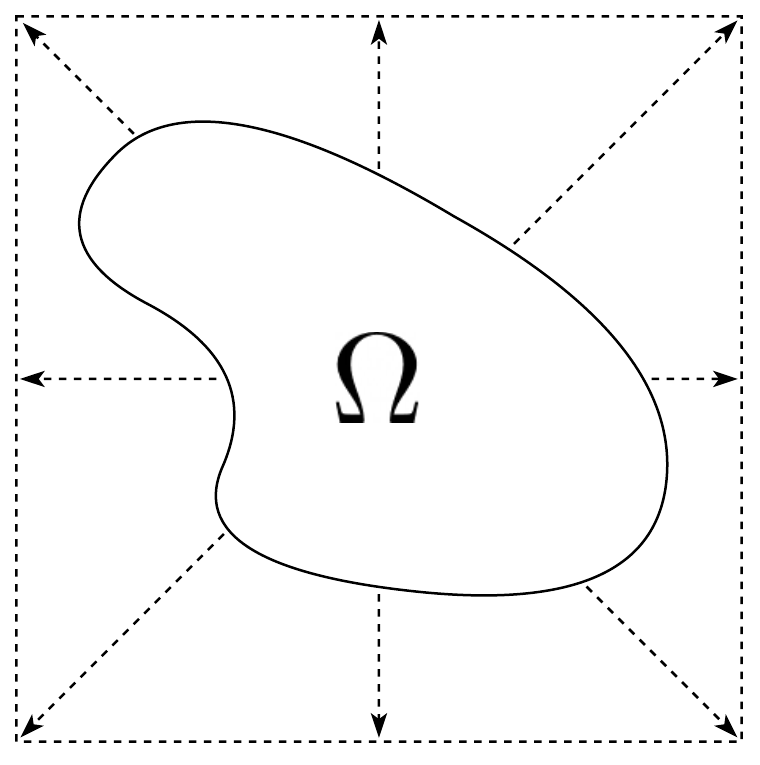}
        \caption*{(iv)}
    \end{subfigure}
    \caption{\textbf{An overview of our computational method for constrained denoising score matching with Brownian transition probabilities.} (i) We can draw samples by sampling $\mathcal{N}(\vec{x}_0, \sigma_t^2 I)$ and then applying reflections on the boundary. (ii) When $t$ is small, we compute the transition density by summing up a mixture of Gaussians (shown for $\Omega = [0, 1]$). (iii) When $t$ is large, we compute using the frequencies of $\Omega$ (shown for $\Omega = [0, 1]$). (iv) We diffeomorphically transform $\Omega \to [0, 1]^d$, where the transition score is tractable.}\label{fig:rhk}
\end{figure*}

We finalize by showing how to sample from and compute the score of the transition density $p_t(\vec{x}_t | \vec{x}_0)$ for the RVESDE. Note that this is the transition density of a reflected Brownian Motion \citep{Harrison1981ReflectedBM}. We highlight the key features of our method in Figure \ref{fig:rhk}.

\textbf{Sampling.} To sample from $p_t(\vec{x}_t | \vec{x}_0)$, we can repeatedly reflect a sample $\vec{y}$ from $\mathcal{N}(\vec{x}_0, \frac{\sigma_t^2}{2} I)$. In particular, we follow the line segment $t \to t\vec{y} + (1 - t)\vec{x}_0$, reflecting in the normal direction when it crosses $\partial \Omega$ and repeating until we reach $t = 1$. This works because, intuitively, the boundary redirects the the Brownian motion but does not change the magnitude. In practice, this process can be quickly computed with classic computational geometric techniques.

\textbf{Score Computation.} There are two approaches for computing the score of $p_t(\vec{x}_t | \vec{x}_0)$ on general geometric domains:

\textit{Approximation with Sum of Gaussians} \citep{Jing2022TorsionalDF}. This method decomposes $p_t(\vec{x}_t | \vec{x}_0)$ into an infinite sum of Gaussian densities that depend on $\vec{x}_0$, $\vec{x}_t$, and the geometry of the domain. For our bounded $\Omega$ with the reflection condition, this gives us the equation
\begin{equation}
    p_t(\vec{x}_t | \vec{x}_0) = \sum_{\vec{x}' \in \mathcal{R}(\vec{x}_t)} p_{\mathcal{N}\paren{\vec{x}_0, \sigma_t^2/2 \cdot I}}(\vec{x}')
\end{equation}
where $p_{\mathcal{N}\paren{\vec{x}_0, \sigma_t^2/2 \cdot I}}$ is the pdf of the Gaussian centered at $\vec{x}_0$ with variance $\sigma_t^2 I$ and $\mathcal{R}(\vec{x}_t)$ is the set of all $\vec{x}' \in \R^d$ s.t. the repeated reflection of the path $t \to t\vec{x}' + (1 - t)\vec{x}_0$ ends in $\vec{x}_t$. Note that this reflection scheme is the same one we use for sampling. Furthermore, through elementary derivations, this gives us a formula for the score $\grad_x \log p_t(\vec{x}_t | \vec{x_0})$.

Generally, this method works quite well for small $\sigma_t$, as we only need to take a small number of local reflections to approximate $p_t(\vec{x}_t | \vec{x}_0)$. However, for larger $\sigma_t$, we need to take many more reflections since the underlying Gaussian is too dispersed, greatly increasing the computational cost.

\textit{Approximation with Laplacian Eigenfunctions} \citep{Bortoli2022RiemannianSG}. This method instead computes using Laplacian Eigenfunctions, a standard technique for solving the heat equation \citep{evans10}. For our problem, these are a (known for each $\Omega$) set of functions $f_i \in L^2(\Omega), i \in \mathbb{N}$ that satisfy $\Delta f_i = -\lambda_i f_i$ and $\grad f_i \cdot \vec{n} = 0$ on $\partial \Omega$. In particular, these form an orthonormal basis for $L^2(\Omega)$, allowing us to solve Equation \ref{eqn:neumhe} directly for an initial density of $\delta_{\vec{x}_0}$:
\begin{equation}
    p_t(\vec{x}_t | \vec{x}_0) = \sum_{i = 0}^\infty e^{-\lambda_i \sigma_t^2 / 2} f_i(\vec{x}_t) f_i(\vec{x}_0)
\end{equation}
This method works well for large $\sigma_t$ because this means that $e^{-\lambda_i \sigma_t^2 / 2} \to 0$, removing the need to evaluate many of the terms. However, for small $\sigma_t$, this method becomes costly because it requires many more terms. Similar to the above method, we can derive a formula for $\grad_x \log p_t(\vec{x}_t | \vec{x_0})$ through this sum.

\textit{Our Method.} We instead propose to combine the above two approaches. In particular, we note that they complement each other: Gaussian sum is accurate for small $\sigma_t$ and eigenfunction sum is accurate for large $\sigma_t$. We can therefore set a $\sigma' \in (\sigma_0, \sigma_1)$ and compute with Gaussian sum when $\sigma_t < \sigma'$ and with eigenfunction sum when $\sigma_t > \sigma'$. In practice, this allows us to upper-bound the number of reflections/eigenfunctions used to $\approx 5$, much fewer than the exponential amount required for each method individually. 

\textit{Scaling to High Dimensions.} By itself, this branching method is unfortunately not enough to scale to very high dimensions. In particular, our computation is, in the worst case, $O(d^k)$ where $d$ is the dimension of $\Omega$ and $k$ is the number of reflection steps or the highest eigenfunction frequency. We have only bounded $k$ to something more manageable. 

To overcome this scaling issue, we consider the simple case of the hypercube $[0, 1]^d$. Since our Brownian motion does not have inter-dimensional interactions, reflections do not interact between non-parallel hyperplanes, and Laplacian eigenfunctions factorize by dimension, we can decompose the probability along each component interval:
\begin{equation}
    p_t(\vec{x}_t | \vec{x}_0) = \prod_{i = 1}^d p_t^i(x_t^i | x_0^i) 
\end{equation}
where $x_t^i$ and $x_0^i$ are the $i$-components of $\vec{x}_t$ and $\vec{x}_0$ respectively and $p_t^i$ is the marginal probability on the $i$-th coordinate. Note that the RVE SDE on $[0, 1]^d$ marginalizes to a RVE SDE on $[0, 1]$ for each dimension:
\begin{equation}
    \dd x_t^i = \overline{\sigma}_t \dd B_t^i + \dd L_t^i
\end{equation}
where $B_t^i$ and $L_t^i$ are the Brownian motion and boundary condition (respectively) for dimension $i$. We can therefore compute on each $\Omega_i = [0, 1]$ and combine the results, reducing the cost from $O(d^k)$ to $O(kd)$. Regular score matching is $O(d)$, and since $k$ is small, we can train Reflected Diffusion Models just as quickly as regular diffusion models.

For more general domains, under certain conditions, we can smoothly and bijectively map from $\mathrm{int}(\Omega) \to (0, 1)^d$. Thus, we can instead learn a diffusion model on $[0, 1]^d$ and then project back to $\Omega$. More details are given in Appendix \ref{sec:app:prac:diffmap}. In particular, this mapping procedure allows us to learn a diffusion model on high-dimensional simplices $\Delta_d$.
\section{Simulating Reflected SDEs}

Combining a score $\vec{s}_\theta$ learned through CSDM and the reverse reflected SDE, we have a Reflected Diffusion Model: sample $\vec{x}_T \sim \mathcal{U}(\Omega)$ and solve the reflected SDE:
\begin{equation}\label{eqn:refldiffmodel}
    \dd \vec{x}_t = -\overline{\sigma}_t^2 \vec{s}_\theta(\vec{x}_t, t) \dd t+ \overline{\sigma}_t \dd \overline{\vec{B}}_t + \dd \overline{\vec{L}}_t
\end{equation}
In this section, we examine numerical methods for simulating examples from this reflected SDE.

\subsection{Euler-Maruyama Discretizations and Thresholding}

The typical Euler-Maruyama discretization of a standard SDE (Equation \ref{eqn:forwardsde}) is given by
\begin{equation}\label{eqn:euma}
    \vec{x}_{t + \Delta t} = \vec{x}_t + \vec{f}(\vec{x}_t, t) \Delta t + g(t) \vec{B}_{\Delta t}
\end{equation}
where $\vec{B}_{\Delta t} \sim \mathcal{N}(0, \Delta t \cdot I)$. For reflected SDEs, one can adapt this discretization by approximating the effect of $\vec{L}_t$ with some suitable operators $\mathcal{O}$.
\begin{equation}\label{eqn:refleuma_proj}
    \vec{x}_{t + \Delta t} = \mathcal{O}(\vec{x}_t + \vec{f}(\vec{x}_t, t) \Delta t + g(t) \vec{B}_{\Delta t})
\end{equation}
Common examples of $\mathcal{O}$ include the projection operator $\proj(x) = \argmin_{y \in \Omega} d(x, y)$ \citep{Liu1993NumericalAT} or the reflection operator $\refl$ used in Section \ref{sec:sm:scaling} \citep{Schuss2013BrownianDA}. One can see that, as $\Delta t \to 0$, both the projection and reflection schemes converge in distribution. Empirically, we find that reflection generates better samples.

Interestingly, this closely mirrors the thresholding step given in Equation \ref{eqn:thresholding}, with the only difference being the choice of operator $\mathcal{O}$ and whether $\mathcal{O}$ is applied before or after the noise step. This difference disappears when $\Delta t \to 0$:
\begin{proposition}[Thresholding solves a reflected SDE]
    Both types of thresholding solve the reflected SDE (Equation \ref{eqn:reflsde}) as $\Delta t \to 0$ under suitable conditions.
\end{proposition}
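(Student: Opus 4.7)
The plan is to treat each thresholding update as a one-step numerical scheme for the Skorokhod problem associated with $\Omega$ and show that, as $\Delta t \to 0$, the piecewise-constant interpolation $\vec{x}^{\Delta t}$ converges in distribution to the unique strong solution of Equation \ref{eqn:reflsde}. Concretely, I would rewrite each scheme as $\vec{x}^{\Delta t}_t = \vec{x}_0 + \int_0^t \vec{f}^{\Delta t}(s) \dd s + \int_0^t g(s) \dd \vec{B}_s + \vec{L}^{\Delta t}_t$, where $\vec{f}^{\Delta t}$ is the piecewise-constant interpolation of the drift and $\vec{L}^{\Delta t}_t := \sum_{s \le t}(\mathcal{O}(\vec{y}_s) - \vec{y}_s)$ accumulates the displacements produced by $\mathcal{O}$, and then show that the pair $(\vec{x}^{\Delta t}, \vec{L}^{\Delta t})$ converges to a solution of the Skorokhod problem driven by the continuous drift plus Brownian terms.

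I would carry this out in three steps. First, establish tightness of $\{(\vec{x}^{\Delta t}, \vec{L}^{\Delta t})\}_{\Delta t > 0}$ in $C([0, T]; \Omega) \times C([0, T]; \R^d)$. Compactness of $\Omega$ makes $\vec{x}^{\Delta t}$ uniformly bounded, a fourth-moment estimate $\E[|\vec{x}^{\Delta t}_{t + h} - \vec{x}^{\Delta t}_t|^4] \le C h^2$ supplies Kolmogorov's tightness criterion, and a uniform bound on the total variation of $\vec{L}^{\Delta t}$ (from the uniform exterior sphere condition together with local Lipschitzness of the projection map near $\partial \Omega$) handles the boundary process. Second, for any subsequential weak limit $(\vec{x}, \vec{L})$, identify it as a solution of the reflected SDE: the drift and stochastic integral pass through by continuous mapping and a martingale-problem argument, while $\dd \vec{L}$ is forced to be supported on $\{t : \vec{x}_t \in \partial \Omega\}$ because each push $\mathcal{O}(\vec{y}) - \vec{y}$ vanishes for $\vec{y} \in \Omega$ and points along an inward normal for either choice of $\mathcal{O}$. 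Uniqueness of the reflected SDE solution (Theorem 2.5.4 of \citet{Pilipenko2014AnIT}) then upgrades subsequential convergence to full convergence. Third, to handle the pre-versus-post-noise ordering of $\mathcal{O}$, I would couple the two schemes pathwise: applying $\mathcal{O}$ before rather than after the Brownian increment shifts each step by at most the outward normal component of $g(t) \vec{B}_{\Delta t}$, which is $O(\sqrt{\Delta t})$ per step and sums to $O(\sqrt{\Delta t})$ uniformly on $[0, T]$, so both orderings share the same limit.

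The main obstacle is the identification step, specifically verifying that in the limit $\vec{L}$ grows only when $\vec{x}_t \in \partial \Omega$ and that its increments lie in the inward normal cone. For the reflection operator this is essentially by construction, but for the closest-point projection it requires careful control over the foot of each projected step, which can jitter between nearby boundary facets under the Brownian noise. These properties are established in the classical numerical analysis of reflected SDEs (e.g., \citet{Liu1993NumericalAT} for the projection scheme); once imported, lifting their drift-only arguments to the full reflected SDE of Equation \ref{eqn:reflsde} is routine because the Brownian term enters additively and its contribution is absorbed by standard Burkholder--Davis--Gundy estimates.
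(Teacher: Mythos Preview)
Your scheme for the projection operator is essentially what the paper does: both you and the paper defer the analytic work to the classical convergence results for projected Euler--Maruyama schemes \citep{Liu1993NumericalAT, Schuss2013BrownianDA}, and you flesh out the tightness/identification skeleton those references supply. The paper handles the pre-versus-post-noise discrepancy differently from your coupling: it simply extends $\vec{f}$ Lipschitz-continuously to $\R^d$, notes that the post-noise iterate may momentarily leave $\Omega$ but the Lipschitz extension keeps the drift error controlled, and lets that discrepancy vanish with $\Delta t$. Your coupling heuristic that the per-step discrepancy is $O(\sqrt{\Delta t})$ and ``sums to $O(\sqrt{\Delta t})$ uniformly on $[0,T]$'' is not justified as stated (there are $O(1/\Delta t)$ steps, so a naive sum diverges; you would need to control the number of boundary visits via a local-time argument), so the paper's extension argument is the cleaner route here.

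The more substantive gap is that you have misidentified what ``both types of thresholding'' means. The two types in the proposition are \emph{static} thresholding (coordinatewise clipping, i.e.\ $\proj$ onto the cube) and \emph{dynamic} thresholding $\mathrm{dynthresh}_p$ (divide each coordinate by $\max(s,1)$ where $s$ is the $p$-th percentile of $|x_i|$, then clip), not projection versus reflection. Your proposal treats the reflection operator $\refl$, which is not at issue, and never addresses $\mathrm{dynthresh}_p$ at all. The paper's argument for dynamic thresholding is a separate reduction step: under the additional hypothesis that $\vec{f}(\vec{x}_t,t)$ does not point outside $[-1,1]^D$ on at least $(1-p)D$ coordinates, the $p$-th percentile $s$ stays $\le 1$ so the normalization is inert and $\mathrm{dynthresh}_p$ collapses to $\proj$, after which the static case applies verbatim. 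Without this reduction, your Skorokhod-problem framework does not obviously apply to $\mathrm{dynthresh}_p$, since the scaling step is a nonlocal operation (it couples all coordinates through the percentile) and is not a projection onto $\Omega$ in any metric.
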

The full proposition and proof are given in Appendix \ref{sec:app:theory:thresh}.

\subsection{Predictor Corrector}

We extend the predictor-corrector (PC) framework of \citet{Song2020ScoreBasedGM}, which has been shown to improve results. In particular, our learned scores can be used to augment the sampling procedure using Langevin Dynamics \citep{Song2019GenerativeMB}. However, this requires Langevin dynamics for a constrained domain \citep{Bubeck2015FiniteTimeAO}, which, for the probability $p$, are given by the reflected SDE:
\begin{equation}
    \dd\vec{x}_t = \frac{1}{2} \grad_x \log p(\vec{x}_t) \dd t + \dd \vec{B}_t + \dd \vec{L}_t
\end{equation}
During our reversed diffusion iterations, we can discretize the langevin dynamics using Reflected Euler-Maruyama and apply our learned score $\vec{s}(\cdot, t)$:
\begin{equation}
    \vec{x}_t' = \refl(\vec{x}_t + \frac{\epsilon}{2} \vec{s}_\theta(\vec{x}_t, t) + \sqrt{2\epsilon} \cdot \vec{z}) \quad \vec{z} \sim \mathcal{N}(0, 1)
\end{equation}
In practice, we find that PC sampling with a small signal-to-noise ratio noticeably improves image generation results.

\textbf{CIFAR-10 Quality Results.} With these components, we test our method for image generation on the CIFAR-10 dataset and report Inception Score (IS) \citep{Salimans2016ImprovedTF} and Frechet Inception Distance (FID) \citep{Heusel2017GANsTB} in table \ref{tab:cifar10results}. Our models remain competitive, achieving a SOTA Inception score of 10.42. However, Tweedies' formula does generalize to reflected diffusion \citep{Efron2011TweediesFA} (more details are in Appendix \ref{sec:app:prac:denoise}), so our model generates images with imperceptible noise (on the scale of $1-2$ pixels), which degrades the FID score to 2.72 \citep{JolicoeurMartineau2020AdversarialSM}. Despite this, our samples are diverse and visually indistinguishable (Appendix \ref{sec:app:fig}).

\begin{table}[t]
    \centering
    \begin{tabular}{l c c}
        Model & IS $\uparrow$ & FID $\downarrow$\\
        \hline
        NCSN++ \citep{Song2020ScoreBasedGM} & 9.89 & 2.20\\
        DDPM++ \citep{Song2020ScoreBasedGM} & 9.68 & 2.41\\
        Styleformer \citep{Park2021StyleformerTB} & 9.94 & 2.82\\
        UNCSN++ \citep{Kim2021SoftTA} & 10.11 & --\\
        VitGAN \citep{Lee2021ViTGANTG} & 9.89 & 4.87\\
        Subspace NCSN++ \citep{Jing2022SubspaceDG} & 9.99 & 2.17\\
        EDM \citep{Karras2022ElucidatingTD} & -- & \textbf{1.97}\\
        \hline
        Reflected Diffusion (ours) & \textbf{10.46} & 2.72\\
    \end{tabular}
    \caption{\textbf{CIFAR10-Sample Quality Results.} We test Reflected Diffusion Models on CIFAR-10 Image Generation and report IS and FID scores. Our model is highly competitive, achieving a state of the art-inception score for unconditional generation. However, FID lags behind due to noise (as discussed in Appendix \ref{sec:app:prac:denoise})}.
    \label{tab:cifar10results}
    \vspace{-5mm}
\end{table}

\subsection{Probability Flow ODE}

Similarly to the probability flow ODE derived in \citet{Song2020ScoreBasedGM}, one can construct an equivalent deterministic process for a reflected SDE. Interestingly, doing this removes the boundary reflection term, so our deterministic process is exactly the original probability flow ODE derived in \citet{Song2020ScoreBasedGM}:
\begin{equation}\label{eqn:ode}
    \dd \vec{x} = \sqbrac{\vec{f}(\vec{x}, t) - \frac{1}{2}g(t)^2 \grad_x \log p_t(x)}\dd t
\end{equation}
Crucially, the thresholding effect is maintained due to the Neumann condition for $\grad_x \log p_t$ (Equation \ref{eqn:neumfp} line 2) and can't be replicated for standard diffusion models. We elaborate on this construction, as well as connections with DDIM \citep{Song2020DenoisingDI} in Appendix \ref{sec:app:theory:ode}.
 
\section{Diffusion Guidance}

Both classifier and classifier-free guidance (Equations \ref{eqn:classguid} and \ref{eqn:classfreeguid}) extend to Reflected Diffusion Models by logarithm and gradient rules. Since thresholding is primarily useful for diffusion guidance, we investigate the relationship between thresholding, diffusion guidance, and Reflected Diffusion Models on the relatively simple downsampled 64x64 ImageNet dataset \citep{Russakovsky2014ImageNetLS}.

\textbf{Thresholding is critical.} We corroborate \citet{Saharia2022PhotorealisticTD}, showing that pixel-spaced diffusion guidance requires thresholding. We show this for classifier-free guidance in Figure \ref{fig:noclip}, where even a low weight $w=1$ causes about half of the samples to diverge. For classifier guidance, around $75$\% of samples diverge (Figure \ref{fig:cl_guidance_noclip}).

\begin{figure}[t]
    \centering
    \includegraphics[width=0.4\textwidth]{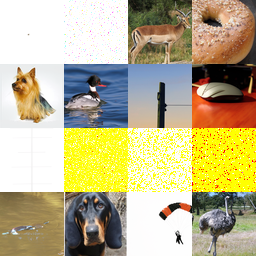}
    \caption{\textbf{Without thresholding, standard diffusion models easily diverge}. We sample using classifier-free guidance ($w=1)$ from a standard diffusion model without using thresholding. Around half of the samples diverge (generating blank images).}\label{fig:noclip}
    \vspace{-5mm}
\end{figure}

\begin{figure*}[!h]
    \centering
    \begin{subfigure}[b]{0.48\textwidth}
        \includegraphics[width=\textwidth]{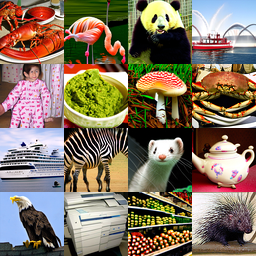}
    \end{subfigure}
    \hfill
    \begin{subfigure}[b]{0.48\textwidth}
        \includegraphics[width=\textwidth]{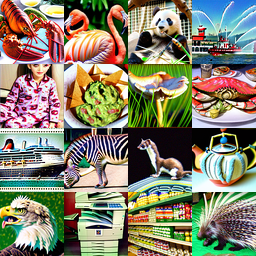}
    \end{subfigure}
    \caption{\textbf{Non cherry-picked guided samples from a reflected and standard diffusion model with high guidance weight.} We compare Reflected Diffusion Models with standard diffusion models for generating class-conditioned 64x64 ImageNet samples for a guidance weight $w=15$. Our generated images are shown on the left, and the baseline is shown on the right (same positions have same classes). Our method retains fidelity while the baseline suffers from oversaturation.}\label{fig:diffguide}
\end{figure*}

\begin{figure}[!h]
    \centering
    \includegraphics[width=0.4\textwidth]{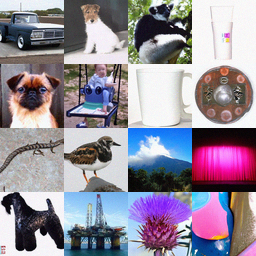}
    \caption{\textbf{Guided ODE samples}. We sample using our ODE with a guidance weight $w=1.5$, retaining image fidelity with fewer forward evaluations (around 100 compared with 1000).} \label{fig:odesample}
    \vspace{-5mm}
\end{figure}

\textbf{Our method retain fidelity under high guidance weight.} Thresholding produces oversaturated images under high guidance weight $w$ \citep{Ho2022ClassifierFreeDG, Saharia2022PhotorealisticTD}, hampering applications which require high fidelity generation. We hypothesize that this is caused by the training and sampling mismatch, and we show in Figure \ref{fig:diffguide} that our method retains fidelity under high guidance weight. We did not find dynamic thresholding method to perform better.

\textbf{ODE sampling works for classifier-free guidance.} The composed score function in classifier-free guidance (Equation \ref{eqn:classfreeguid}) maintains the Neumann boundary condition (Equation \ref{eqn:neumfp}), allowing for ODE sampling. Using this, we demonstrate the first case of high-fidelity classifier-free guided generation using ODEs in Figure \ref{fig:odesample}. Interestingly, ODE equivalent DDIM sampling fails for classifier-free guidance but works for classifier guidance, despite classifier guidance being worse without thresholding (Appendix \ref{sec:app:fig}).

\section{Likelihood Bound}\label{sec:likelihood}

Incidentally, our weighted score matching loss corresponds to an ELBO for our generative model. To show this, we extend Girsanov's Theorem \citep{ksendal1987StochasticDE}, which is used to th derive the ELBO for standard diffusion models \citep{Song2021MaximumLT,Kingma2021VariationalDM,Huang2021AVP}:
\begin{theorem}[Reflected Girsanov for KL divergence]\label{thm:reflgirsanov}
    Suppose we have two reflected SDEs on the same domain $\Omega$
    \begin{align}
        \dd \vec{x}_t &= \vec{f}_1(\vec{x}_t, t)\dd t + g(t) \dd \vec{B}_t + \dd \vec{L}_t\\
        \dd \vec{y}_t &= \vec{f}_2(\vec{y}_t, t)\dd t + g(t) \dd \vec{B}_t + \dd \vec{L}_t
    \end{align}
    from $t = 0$ to $T$ with $\vec{x}_0 = \vec{y}_0 = \vec{z} \in \Omega$. 
    
    Let $\bm{\mu}, \bm{\nu}$ be the path measures for (resp.) $\vec{x}$ and $\vec{y}$. Then,
    \begin{equation}\label{eqn:smgirs}
        \E_{\bm{\mu}}\sqbrac{\log \frac{\dd \bm{\mu}}{\dd \bm{\nu}}} = \frac{1}{2} \int_0^T\E_{p_{\vec{x}_t}(\vec{y})}\sqbrac{g(t)^2 \norm{(\vec{f}_1 - \vec{f}_2)(\vec{y}, t)}^2} \dd t
    \end{equation}
\end{theorem}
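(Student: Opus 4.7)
The plan is to adapt the classical Girsanov change-of-measure argument to the reflected setting. The only new ingredient beyond the unreflected proof is to verify that the local-time reflection process $\vec{L}_t$ is preserved under the Girsanov transformation. This works because $\vec{L}_t$ has bounded variation (so it contributes no quadratic-variation correction) and, more importantly, is pathwise determined from the continuous state trajectory by the Skorokhod problem (so it does not care which martingale one uses to represent the diffusive part of the path).

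Concretely, I would start from the measure $\bm{\nu}$ under which $\vec{B}_t^{\bm{\nu}}$ is Brownian motion and $\vec{y}_t$ solves the $\vec{f}_2$-reflected SDE, and define the candidate exponential martingale
$$Z_T := \exp\!\paren{\int_0^T \tfrac{(\vec{f}_1 - \vec{f}_2)(\vec{y}_t, t)}{g(t)} \cdot d\vec{B}_t^{\bm{\nu}} \;-\; \tfrac{1}{2}\int_0^T \tfrac{\norm{(\vec{f}_1-\vec{f}_2)(\vec{y}_t, t)}^2}{g(t)^2}\, dt}.$$
A Novikov/Kazamaki condition (absorbed into the ``suitable'' hypotheses, using the Lipschitz and boundedness assumptions on $\vec{f}_1,\vec{f}_2,g$) ensures $\E_{\bm{\nu}}[Z_T] = 1$, so $d\bm{\mu}'/d\bm{\nu} := Z_T$ defines a probability measure. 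The classical Girsanov theorem gives that $\vec{B}_t^{\bm{\mu}'} := \vec{B}_t^{\bm{\nu}} - \int_0^t g(s)^{-1}(\vec{f}_1-\vec{f}_2)(\vec{y}_s, s)\, ds$ is a $\bm{\mu}'$-Brownian motion, and substituting yields
$$d\vec{y}_t = \vec{f}_1(\vec{y}_t, t)\, dt + g(t)\, d\vec{B}_t^{\bm{\mu}'} + d\vec{L}_t,$$
with the \emph{same} reflection process $\vec{L}_t$, since it is read off pathwise from $\vec{y}_t$ via the Skorokhod map. Pathwise uniqueness for the $\vec{f}_1$-reflected SDE (guaranteed by the uniform exterior sphere condition on $\Omega$ invoked in Section \ref{sec:method:rsde}) then identifies the law of $\vec{y}$ under $\bm{\mu}'$ with the law of $\vec{x}$ under $\bm{\mu}$, so $\bm{\mu}' = \bm{\mu}$ on path space.

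Taking logarithms and evaluating under $\bm{\mu}$, I would substitute $d\vec{B}_t^{\bm{\nu}} = d\vec{B}_t^{\bm{\mu}} + g(t)^{-1}(\vec{f}_1-\vec{f}_2)(\vec{y}_t, t)\, dt$ into $\log Z_T$; the stochastic integral against $\vec{B}_t^{\bm{\mu}}$ is a martingale of mean zero, and the cross term combines with the $-\tfrac{1}{2}$ quadratic correction to produce a single $+\tfrac{1}{2}\int_0^T g(t)^{-2}\norm{(\vec{f}_1-\vec{f}_2)(\vec{y}_t,t)}^2\, dt$ contribution, delivering the Girsanov-type integral on the right-hand side of Equation \ref{eqn:smgirs}. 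The main obstacle is precisely the preservation of $\vec{L}_t$ through the change of measure: in the unreflected case Girsanov is essentially bookkeeping once Novikov is checked, whereas here one has to rule out any interaction between the measure change and the boundary mechanism. This ultimately reduces to the deterministic and pathwise nature of the Skorokhod map together with strong uniqueness for reflected SDEs on $\Omega$, which is where the regularity assumptions on $\partial\Omega$ do the real work.
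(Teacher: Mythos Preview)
Your proposal is correct and takes a genuinely different route from the paper. The paper proves its reflected Girsanov theorem by a \emph{penalty approximation}: it replaces the reflected SDE by a sequence of ordinary SDEs on all of $\R^d$ whose drift contains an additional term $i\,d(x,\Omega)\vec{v}(x)$ pushing trajectories back toward $\Omega$ with growing strength $i$, applies the classical (unreflected) Girsanov theorem to each approximant, and then lets $i\to\infty$, invoking the known convergence of penalized processes to reflected ones. Your argument instead works directly on the reflected SDE: you change measure via the exponential martingale $Z_T$, observe that the Skorokhod constraints on $(\vec{y}_t,\vec{L}_t)$ are pathwise and hence measure-independent, rewrite the driving noise, and appeal to strong uniqueness for the $\vec{f}_1$-reflected SDE. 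The paper's approach buys robustness---everything reduces to the unreflected case plus a limit, so one never has to analyze how the change of measure interacts with local time---at the price of an extra approximation layer and a limiting argument. Your approach is more direct and makes transparent \emph{why} no boundary correction appears in the KL formula: the same $\vec{L}_t$ literally appears in both SDE representations and cancels. Both routes ultimately rest on the same regularity of $\partial\Omega$ (uniform exterior sphere), the paper for convergence of the penalized scheme and you for strong uniqueness of the reflected SDE.
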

The full theorem and proof are given in Appendix \ref{sec:app:theory:girs}.

\begin{table}[!h]
    \centering
    \begin{tabular}{l c c}
        Model & C-10 & IN32\\
        \hline
        \textbf{Non-diffusion} & &\\
        \hline
        Flow++ \citep{Ho2019FlowIF} & 3.08 & --\\
        Pixel-CNN++ \citep{Salimans2017PixelCNNIT} & 2.92 &--\\
        Sparse Transformer \citep{Child2019GeneratingLS} & 2.80 & --\\
        \hline
        \textbf{Diffusion: Modified Noise Schedule} & &\\
        \hline
        ScoreFlow \citep{Song2021MaximumLT}& 2.83 & 3.76\\
        VDM \citep{Kingma2021VariationalDM} & \textbf{2.65} & \textbf{3.72}\\
        \hline
        \textbf{Diffusion: No Noise Modifications} & &\\
        \hline
        ScoreSDE \citep{Song2020ScoreBasedGM} & 2.99 & --\\
        ARDM \citep{Hoogeboom2021AutoregressiveDM} & 2.71 & --\\
        ScoreFlow \citep{Song2021MaximumLT}& 2.86 & 3.83\\
        VDM \citep{Kingma2021VariationalDM} & 2.70 & --\\
        \hline
        Reflected Diffusion (ours) & \textbf{2.68} & \textbf{3.74}\\
    \end{tabular}
    \caption{\textbf{CIFAR-10 and ImageNet32 Bits-per-Dimension (BPD).} No data augmentaiton; lower is Better. We test the likelihood of Reflected Diffusion Models for CIFAR-10 and downsampled ImageNet32 without data augmentation. Our method is second best, nearly matching the state of the art (VDM), without requiring importance sampling or a learned noise schedule.}
    \label{tab:likelihoods}
    \vspace{-4mm}
\end{table}

Note that, by also incorporating the prior and reconstruction loss, Equation \ref{eqn:smgirs} gives us an upper bound on the negative log-likelihood (Appendix \ref{sec:app:theory:girs}). Furthermore, for our reversed RVE SDE, Equation \ref{eqn:smgirs} becomes
\begin{equation}
    \frac{1}{2} \int_0^T \E_{\vec{x}_t \sim p_t}\sqbrac{\overline{\sigma}_t^2 \norm{\vec{s}_\theta(\vec{x}_t, t) - \grad_x \log p_t(\vec{x}_t)}^2} \dd t
\end{equation}
which is a scaled version of our proposed weighted score matching loss in Equation \ref{eqn:wcdsm}. Therefore, we already implicitly train with maximum likelihood. Furthermore, when applied to an individual data point $\vec{x}$, we recover the constrained denoising score matching loss:
\begin{equation}
    \frac{1}{2} \int_0^T \E_{\vec{x}_t \sim p_t(\cdot | \vec{x})}\sqbrac{\overline{g}(t)^2 \norm{\vec{s}_\theta(\vec{x}_t, t) - \grad_x \log p_t(\vec{x}_t | \vec{x})}^2} \dd t
\end{equation}
which allows us to derive an upper bound on $-\log p(\vec{x})$.

\textbf{Image Likelihood Results.} We test Reflected Diffusion Models on CIFAR-10 \citep{Krizhevsky2009LearningML} and ImageNet32 \citep{Oord2016PixelRN} for likelihoods, both without data augmentation. Our method performs comparatively to the SOTA while reducing the number of hyperparameters (in the form of importance sampling and learned noise schedules)\footnote{We omit several results which report a better BPD than VDMs \citep{Kingma2021VariationalDM} on Imagenet32 but a much worse CIFAR-10 result as they test on the the ImageNet32 dataset used for classification \citep{Chrabaszcz2017ADV}, which is significantly easier and incomparable due to the use of anti-aliasing.}. Note that we can compute exact likelihoods through the probability flow ODE, which typically improves results \citep{Song2021MaximumLT}, but, for a fair comparison with VDM, we report the likelihood bound.

\section{Simplex Diffusion}

We also demonstrate that our reflected diffusion model can scale to high dimensional simplices. We train on softmaxed Inception classifier logits for ImageNet \citep{Szegedy2014GoingDW}, which take values in a $1000$-dimensional simplex. Our training dynamics are reported in Figure \ref{fig:simplexresults} (with a $0.99$ EMA), showing that our method is able to optimize the loss (and thus maximize the ELBO) even in high dimensions. Our diffusion process is fundamentally different from the simplex diffusion method from \citet{Richemond2022CategoricalSW}, as we evolve our dynamics directly on the simplex while the previous method diffuses on a higher dimensional space (the positive orthant) and projects to the simplex.

\begin{figure}[t]
    \centering
    \includegraphics[width=0.4\textwidth]{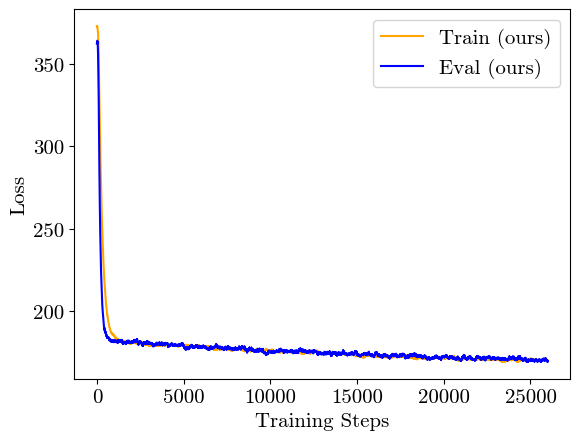}
    \caption{\textbf{Simplex Diffusion Training and Validation Curves.} Our method trains stably in high dimensions.}
    \label{fig:simplexresults}
    \vspace{-5mm}
\end{figure}




\section{Conclusion}

We introduced Reflected Diffusion Models, a diffusion model which respects natural data constraints through reflected SDEs. Our method scales score matching on general bounded geometries and retains theoretical constructs from standard diffusion models. Our analysis also sheds light on the commonly used thresholding sampling method and provides improvements through correct training.

We did not explore architecture or noise scheduling, which are critical for state of the art results; we leave this (and scaling to text to-image-generation) for future work.

Latent Diffusion (LD) \citep{Rombach2021HighResolutionIS} is a diffusion model method that also incidentally does not require thresholding. We hypothesize that this is because both our method and LD directly incorporate data space constraints. Notably, we work over an outer bound of the support of the data distribution, while LD works over a submanifold learned by a VAE. Future work could try to find a middle ground between these two data support approaches.
\section{Acknowledgements}

This project was supported by NSF (\#1651565), ARO (W911NF-21-1-0125), ONR (N00014-23-1-2159), CZ Biohub, and Stanford HAI GCP Grants. AL is supported by a NSF Graduate Research Fellowship. We would also like to thank Chenlin Meng for helpful discussions.

\bibliography{refs}
\bibliographystyle{icml2023}

\newpage
\appendix
\onecolumn
\section{Theoretical Constructs}

\subsection{Reflected Stochastic Differential Equations}\label{sec:app:theory:rsde}

We follow \citet{Pilipenko2014AnIT} in our presentation. Given a domain $\Omega$ and an oblique reflection vector field $\vec{v}$ that satisfies $\vec{v}(\vec{x}) \cdot \vec{n}(\vec{x}) = 1$, where $\vec{n}$ is the inward pointing unit normal vector field, the reflected SDE is defined as
\begin{equation}
    \dd\vec{x}_t = \vec{f}(\vec{x}_t, t)\dd t + \vec{G}(\vec{x}_t, t) \dd \vec{B}_t + \vec{v}(\vec{x}_t) \dd \vec{L}_t
\end{equation}
where $\vec{L}_t$ is defined recursively as $\int_0^t \mathbbm{1}_{\vec{x}_t \in \partial \Omega} \dd \vec{L}_s$. Here, we see that $\vec{L}_t$ is a process that determines whether $\vec{x}_t$ hits the boundary and then applies a reflection. For our purposes in the main paper, $\vec{v} = \vec{n}$ and we surppress the notation for compactness.
Define $\bm{\sigma} = \frac{1}{2} \vec{G}^\top \vec{G}$. When
\begin{equation}
    \vec{v}(\vec{x}_t, t) = \frac{\bm{\sigma}(\vec{x}_t, t) \vec{n}(\vec{x})}{\norm{\bm{\sigma}(\vec{x}_t, t) \vec{n}(\vec{x})}}
\end{equation}
then Equation \ref{eqn:neumfp} generalizes (under suitable regularity conditions) \citep{Schuss2013BrownianDA}:
\begin{equation}\label{eqn:app:neumfp}
    \begin{gathered}
        \parderiv{}{t}p_t(\vec{x}) = \ddiv(- p_t(\vec{x}) \vec{f}(\vec{x}, t) + \bm{\sigma}(\vec{x}, t) \grad_\vec{x} p_t)\\ (p_t(\vec{x}) \vec{f}(\vec{x}, t) - \bm{\sigma}(\vec{x}, t) \grad_\vec{x} p_t) \cdot \vec{n}(\vec{x}) = 0 \text{ when } \vec{x} \in \partial \Omega,  \vec{n} \text{ is normal}, t > 0
    \end{gathered}
\end{equation}
As such, this induces a reverse process \citep{Williams1988OnTO, Cattiaux1988TimeRO} that one can easily check has the same marginal probability distributions:
\begin{equation}
    \dd\overline{\vec{x}}_t = \sqbrac{\vec{f}(\overline{\vec{x}}_t, t) - \bm{\sigma}(\overline{\vec{x}}_t, t) \grad_x \log p_t(\overline{\vec{x}}_t)} \dd t + \vec{G}(\overline{\vec{x}}_t, t) \dd \overline{\vec{B}}_t + \overline{\vec{v}}(\overline{\vec{x}}_t) \dd \overline{\vec{L}}_t
\end{equation}
Here $\overline{\vec{v}}$ is a vector field that satisfies the condition $\overline{\vec{v}} \cdot \vec{n} = 1$ and $\overline{\vec{v}} + \vec{v}$ is a positive multiple of $\vec{n}$.

\subsection{Constrained Denoising Score Matching}\label{sec:app:theory:cdsm}

\begin{proposition}\label{prop:app:cdsm}
    Suppose that we perturb an $\Omega$-supported density $a(\vec{x})$ with noise $b(\vec{x} | \cdot)$ (also supported on $\Omega$) to get a new density $b(\vec{x}) := \int_\Omega a(\vec{y}) b(\vec{x} | \vec{y}) d\vec{y}$. Then, under suitable regularity conditions for the smoothness of $a$ and $b$, the score matching loss for $b$:
    \begin{equation}\label{eqn:app:pertbsm}
        \frac{1}{2} \E_{\vec{x} \sim b}^\Omega \norm{\vec{s}_\theta(\vec{x}) - \grad_x \log b(\vec{x})}^2
    \end{equation}
    is equal (up to a constant factor that does not depend on $\vec{s}$) to the CSDM loss:
    \begin{equation}\label{eqn:app:pertcdsm}
        \frac{1}{2} \E_{\vec{x}_0 \sim a}^\Omega \E_{\vec{x} \sim b(\cdot | \vec{x}_0)}^\Omega \norm{\vec{s}_\theta(\vec{x}) - \grad_x \log b(\vec{x} | \vec{x}_0)}^2
    \end{equation}
\end{proposition}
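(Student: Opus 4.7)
The plan is to adapt the classical argument of Vincent (2011) for denoising score matching to the constrained setting, exploiting the fact that the derivation requires only Fubini's theorem and differentiation under the integral sign, not integration by parts. Concretely, I would expand both quadratic losses via the identity $\|u - v\|^2 = \|u\|^2 - 2\langle u, v\rangle + \|v\|^2$, then show that after discarding $\theta$-independent terms the remaining pieces agree. This is where bounded domains are much friendlier than one might fear: because Stokes' theorem never enters, the boundary contributions that obstruct implicit score matching (see Equation \ref{eqn:bsmreg}) simply do not appear here.

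First I would expand Equation \ref{eqn:app:pertbsm} as
\begin{equation*}
\tfrac{1}{2}\E_{\vec{x}\sim b}^\Omega \|\vec{s}_\theta(\vec{x})\|^2 \;-\; \E_{\vec{x}\sim b}^\Omega \inn{\vec{s}_\theta(\vec{x}), \grad_x \log b(\vec{x})} \;+\; C_1,
\end{equation*}
where $C_1$ depends only on $b$. The first term equals $\tfrac12 \int_\Omega \int_\Omega a(\vec{y}) b(\vec{x}|\vec{y}) \|\vec{s}_\theta(\vec{x})\|^2 \dd\vec{y}\dd\vec{x}$ after substituting the marginalization $b(\vec{x}) = \int_\Omega a(\vec{y}) b(\vec{x}|\vec{y})\dd\vec{y}$ and applying Fubini; this exactly matches the corresponding quadratic term from expanding Equation \ref{eqn:app:pertcdsm}.

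The crux is the cross term. I would rewrite it using $b(\vec{x})\grad_x \log b(\vec{x}) = \grad_x b(\vec{x})$, differentiate under the integral sign in the marginalization identity to get $\grad_x b(\vec{x}) = \int_\Omega a(\vec{y}) \grad_x b(\vec{x}|\vec{y}) \dd \vec{y} = \int_\Omega a(\vec{y}) b(\vec{x}|\vec{y}) \grad_x \log b(\vec{x}|\vec{y}) \dd \vec{y}$, and then invoke Fubini once more to obtain
\begin{equation*}
\E_{\vec{x}\sim b}^\Omega \inn{\vec{s}_\theta(\vec{x}), \grad_x \log b(\vec{x})} \;=\; \E_{\vec{x}_0\sim a}^\Omega \E_{\vec{x}\sim b(\cdot|\vec{x}_0)}^\Omega \inn{\vec{s}_\theta(\vec{x}), \grad_x \log b(\vec{x}|\vec{x}_0)}.
\end{equation*}
This identifies the cross term with the one coming out of Equation \ref{eqn:app:pertcdsm}, so the two losses differ only by a ($\theta$-independent) constant.

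The main technical obstacle is justifying both differentiation under the integral and the two applications of Fubini, which is exactly what the ``suitable regularity conditions'' package. I would state these as: $a \in L^1(\Omega)$, $(\vec{x},\vec{y}) \mapsto b(\vec{x}|\vec{y})$ is continuously differentiable in $\vec{x}$ on $\Omega \times \Omega$, and $\grad_x b(\vec{x}|\vec{y})$ admits an $a(\vec{y})$-integrable dominating function locally in $\vec{x}$; these conditions are satisfied by the RVE SDE transition kernel in Section \ref{sec:sm:scaling}, whose density is smooth in the interior. Importantly, no hypothesis on the behavior of $b(\cdot|\vec{y})$ at $\partial \Omega$ is required, because all integrals are intrinsically over $\Omega$ and there is no boundary integral to control; this is precisely the reason the constrained \emph{denoising} framework succeeds where constrained implicit score matching must fall back to weightings that vanish on $\partial\Omega$.
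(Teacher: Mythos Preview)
Your proposal is correct and follows essentially the same approach as the paper: both reduce the claim to the cross-term identity $\E_{\vec{x}\sim b}^\Omega \inn{\vec{s}_\theta(\vec{x}), \grad_x \log b(\vec{x})} = \E_{\vec{x}_0\sim a}^\Omega \E_{\vec{x}\sim b(\cdot|\vec{x}_0)}^\Omega \inn{\vec{s}_\theta(\vec{x}), \grad_x \log b(\vec{x}|\vec{x}_0)}$ and establish it by writing $b(\vec{x})\grad_x\log b(\vec{x}) = \grad_x b(\vec{x})$, differentiating the marginalization integral, and applying Fubini. Your write-up is slightly more explicit (you also match the quadratic $\|\vec{s}_\theta\|^2$ term and spell out the regularity hypotheses), but the argument is the Vincent (2011) derivation with domain $\Omega$, exactly as in the paper.
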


\begin{proof}
    This proof comes down to showing that

    \begin{equation}
        \E_{\vec{x} \sim b}^\Omega \inn{\vec{s}_\theta(\vec{x}), \grad_x \log b(\vec{x})} = \E_{\vec{x}_0 \sim a}^\Omega \E_{\vec{x} \sim b(\cdot | \vec{x}_0)}^\Omega \inn{\vec{s}_\theta(\vec{x}), \grad_x \log b(\vec{x} | \vec{x}_0)}
    \end{equation}

    which can be done directly

    \begin{align}
        \E_{\vec{x} \sim b}^\Omega \inn{\vec{s}_\theta(\vec{x}), \grad_x \log b(\vec{x})} &= \int_\Omega \inn{\vec{s}_\theta(\vec{x}), \grad_x \log b(\vec{x})} b(\vec{x}) \dd \vec{x}\\
        &= \int_\Omega \inn{\vec{s}_\theta(\vec{x}), \frac{\grad_x b(\vec{x})}{b(\vec{x})} }b(\vec{x}) \dd \vec{x}\\
        &= \int_\Omega \inn{\vec{s}_\theta(\vec{x}), \grad_x b(\vec{x})}\dd \vec{x}\\
        &= \int_\Omega \inn{\vec{s}_\theta(\vec{x}), \grad_x \int_\Omega a(\vec{y}) b(\vec{x} | \vec{y}) \dd \vec{y}}\dd \vec{x}\\
        &= \int_\Omega \inn{\vec{s}_\theta(\vec{x}), \int_\Omega a(\vec{y}) \grad_x b(\vec{x} | \vec{y}) \dd \vec{y}}\dd \vec{x}\\
        &= \int_\Omega \inn{\vec{s}_\theta(\vec{x}), \int_\Omega a(\vec{y}) b(\vec{x} | \vec{y}) \grad_x \log b(\vec{x} | \vec{y}) \dd \vec{y}}\dd \vec{x}\\
        &= \int_\Omega \int_\Omega a(\vec{y}) b(\vec{x} | \vec{y}) \inn{\vec{s}_\theta(\vec{x}), \grad_x \log b(\vec{x} | \vec{y})} \dd\vec{y}\dd \vec{x}\\
        &= \E_{\vec{x}_0 \sim a}^\Omega \E_{\vec{x} \sim b(\cdot | \vec{x}_0)}^\Omega \inn{\vec{s}_\theta(\vec{x}), \grad_x \log b(\vec{x} | \vec{x}_0)}
    \end{align}
\end{proof}

This proof is exactly the same as the one presented in \citep{Vincent2011ACB}. The only difference is that we replace the domain of integration with $\Omega$. Note that the key property that allows us to complete the proof is the convolution identity, which generalizes unlike Stokes' theorem for implicit score matching.

\subsection{Probability Flow ODE and Connections to DDIM}\label{sec:app:theory:ode}

We now derive the probability flow ODE, show how to use it to sample, and discuss connections with DDIM. For convenience, we will work with the assumptions given in the paper (that the diffusion coefficient is a scalar depending on only time and that reflection is in the normal direction), but our results directly generalize (given sufficient regularity conditions) to general noise schedules and oblique reflections.

\begin{proposition}[Probability Flow ODE]
    For the reflected SDE
    \begin{equation}
        \dd\vec{x}_t = \vec{f}(\vec{x}_t, t)\dd t + g(t) \dd \vec{B}_t + \dd \vec{L}_t
    \end{equation}
    The ODE given by
    \begin{equation}
        \dd \vec{x}_t = \sqbrac{\vec{f}(\vec{x}_t, t) - \frac{g(t)^2}{2} \grad_x \log p_t(\vec{x}_t)} \dd t
    \end{equation}
    follows the same probability evolution $p_t$.
\end{proposition}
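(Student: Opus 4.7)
The plan is to derive the result at the level of the Fokker--Planck equation, exactly mirroring the unconstrained argument of \citet{Song2020ScoreBasedGM} but paying careful attention to the boundary. The key observation is that the Neumann boundary condition in Equation \ref{eqn:neumfp} is precisely what lets us absorb the reflection term $\dd \vec{L}_t$ into a pure transport equation.

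First, I would write the Fokker--Planck equation for the reflected SDE (Equation \ref{eqn:neumfp}) in the form
\begin{equation*}
    \parderiv{}{t} p_t = -\ddiv\!\paren{p_t \vec{f} - \tfrac{g^2}{2}\grad_x p_t}, \qquad \paren{p_t \vec{f} - \tfrac{g^2}{2}\grad_x p_t}\cdot \vec{n} = 0 \text{ on } \partial \Omega.
\end{equation*}
Using $\grad_x p_t = p_t \grad_x \log p_t$, the interior equation rewrites as $\partial_t p_t = -\ddiv(p_t \tilde{\vec{f}})$ with
\begin{equation*}
    \tilde{\vec{f}}(\vec{x},t) := \vec{f}(\vec{x},t) - \tfrac{g(t)^2}{2}\grad_x \log p_t(\vec{x}),
\end{equation*}
which is just the continuity (Liouville) equation associated with the deterministic ODE in the statement.

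Next I would verify that the same rewriting turns the Neumann condition into a tangency condition for $\tilde{\vec{f}}$: on $\partial \Omega$,
\begin{equation*}
    p_t \tilde{\vec{f}} \cdot \vec{n} = \paren{p_t \vec{f} - \tfrac{g^2}{2}\grad_x p_t}\cdot \vec{n} = 0.
\end{equation*}
Thus wherever $p_t>0$, the drift $\tilde{\vec{f}}$ is tangent to $\partial \Omega$, so the probability flow ODE transports mass along $\partial \Omega$ rather than across it. Consequently, no boundary/reflection term is needed in the ODE: the flow stays inside $\Omega$ on its own, and the no-flux condition at the boundary is automatically preserved by the transport equation.

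Finally, by uniqueness of the solution to the linear transport/continuity equation $\partial_t q_t = -\ddiv(q_t \tilde{\vec{f}})$ on $\Omega$ with no-flux boundary data and initial condition $q_0 = p_0$ (standard under the stated Lipschitz/regularity assumptions, cf.\ \citet{evans10}), the density induced by the ODE equals $p_t$ for all $t$. The main obstacle I anticipate is making this last uniqueness step fully rigorous: one must ensure that $\grad_x \log p_t$ is regular enough on $\Omega$ (in particular, integrable against $p_t$ and well-behaved up to $\partial \Omega$) for the characteristics of $\tilde{\vec{f}}$ to exist and to remain in $\Omega$. Under the regularity assumptions on $\vec{f}$, $g$, and $\Omega$ invoked in Section \ref{sec:method:rsde}, this is routine but technical, and is the step that warrants the ``under suitable regularity conditions'' caveat.
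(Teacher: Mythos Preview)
Your proposal is correct and essentially identical to the paper's proof: both rewrite the Fokker--Planck equation as the continuity equation for $\tilde{\vec f}=\vec f-\tfrac{g^2}{2}\grad_x\log p_t$ and then use the Neumann condition to conclude that $\tilde{\vec f}\cdot\vec n=0$ on $\partial\Omega$, so the ODE flow stays in $\Omega$ without a reflection term. Your version is slightly more explicit about the uniqueness/regularity step, which the paper's proof leaves implicit.
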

\begin{proof}
    By the forward Kolmogorov Equation, we ca see that the ODE follows
    \begin{equation}
        \parderiv{}{t}p_t(\vec{x}) = \ddiv(- p_t(\vec{x}) \vec{f}(\vec{x}, t) + \frac{g(t)^2}{2}\grad_\vec{x} p_t)
    \end{equation}
    However, we must confirm that the ODE doesn't exit $\Omega$. By the Neumann boundary conditions for the SDE, we see that 
    \begin{equation}
        (\vec{f}(\vec{x}, t) - \frac{g(t)^2}{2} \grad_x \log p_t(\vec{x})) \cdot \vec{n}(\vec{x})
    \end{equation}
    on the boundary, so the flow induced by the ODE is indeed a valid diffeomorphism from $\Omega \to \Omega$.
\end{proof}
Similar to DDIM, we can derive equivalent processes by annealing the noise.
\begin{proposition}[Annealing Noise Level]
    For the reflected SDE
    \begin{equation}
        \dd\vec{x}_t = \vec{f}(\vec{x}_t, t)\dd t + g(t) \dd \vec{B}_t + \dd \vec{L}_t
    \end{equation}
    The reflected SDE
    \begin{equation}
        \dd\vec{x}_t = \sqbrac{\vec{f}(\vec{x}_t, t) - \frac{g(t)^2 - \overline{g}(t)^2}{2} \grad_x \log p_t(\vec{x}) \dd t} + \overline{g}(t) \dd \vec{B}_t + \dd \vec{L}_t
    \end{equation}
    follows the same probability evoluation $p_t$ for all noise levels $\overline{g} > 0$.
\end{proposition}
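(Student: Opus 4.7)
The plan is to characterize the second reflected SDE by its Neumann Fokker--Planck system (Equation \ref{eqn:app:neumfp}), substitute the candidate solution $p_t$, and check that both the interior PDE and the boundary flux condition collapse to those already satisfied by $p_t$ for the original SDE. Uniqueness of the initial-boundary-value problem then forces the marginal law of the modified process to coincide with $p_t$.

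Concretely, I would first write down the augmented drift $\tilde{\vec{f}}(\vec{x}, t) := \vec{f}(\vec{x}, t) - \tfrac{g(t)^2 - \overline{g}(t)^2}{2}\grad_x \log p_t(\vec{x})$ and denote by $\tilde{p}_t$ the (unknown) marginal density of the modified reflected SDE. Applying Equation \ref{eqn:app:neumfp} with drift $\tilde{\vec{f}}$ and diffusion coefficient $\overline{g}(t)$ gives the PDE $\partial_t \tilde{p}_t = \ddiv(-\tilde{p}_t \tilde{\vec{f}} + \tfrac{\overline{g}^2}{2}\grad_x \tilde{p}_t)$ together with the Neumann condition $(\tilde{p}_t \tilde{\vec{f}} - \tfrac{\overline{g}^2}{2}\grad_x \tilde{p}_t)\cdot \vec{n} = 0$ on $\partial\Omega$.

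Next, I would substitute the ansatz $\tilde{p}_t = p_t$ and use the elementary identity $p_t \grad_x \log p_t = \grad_x p_t$ to rewrite $p_t \tilde{\vec{f}} = p_t \vec{f} - \tfrac{g^2 - \overline{g}^2}{2}\grad_x p_t$. Plugging this into both the divergence and the boundary flux, the two score-dependent terms involving $g^2 - \overline{g}^2$ and $\overline{g}^2$ combine to yield exactly $-p_t \vec{f} + \tfrac{g^2}{2}\grad_x p_t$, recovering the original Neumann Fokker--Planck system (Equation \ref{eqn:neumfp}) that $p_t$ is already known to satisfy. Since $p_0$ is the common initial datum, uniqueness of solutions to this parabolic initial-boundary-value problem (under the regularity assumptions already invoked for the forward process) gives $\tilde{p}_t = p_t$ for all $t$.

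The one step that deserves care is ensuring the modified SDE genuinely admits a unique strong solution, because the augmented drift contains $\grad_x \log p_t$, which is only Lipschitz when $p_t$ is smooth and bounded away from zero. This is the same technicality that arises for the reverse reflected SDE in Section \ref{sec:method:reverse}, and it is handled by the same regularity hypotheses on $\vec{f}$, $g$, and $\partial\Omega$ assumed throughout the paper. I would note in passing that the two special cases $\overline{g} = 0$ (the probability flow ODE) and $\overline{g} = g$ (the original reflected SDE) are recovered as endpoints of this family, and that the Neumann condition is precisely what makes the $\overline{g} = 0$ flow stay inside $\Omega$ even though the reflection term formally disappears.
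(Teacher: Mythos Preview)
Your proof is correct and is precisely the approach the paper takes: the paper's entire proof is the single sentence ``This follows directly from our Fokker-Planck Equation,'' and your proposal is a careful unpacking of that line, verifying that the probability flux $-p_t\tilde{\vec{f}} + \tfrac{\overline{g}^2}{2}\grad_x p_t$ collapses to the original flux $-p_t\vec{f} + \tfrac{g^2}{2}\grad_x p_t$ in both the interior PDE and the Neumann boundary condition. Your remarks on well-posedness of the modified drift and on the endpoints $\overline{g}=0,\,\overline{g}=g$ are additional but consistent with the paper's surrounding discussion.
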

\begin{proof}
    This follows directly from our Fokker-Planck Equation.
\end{proof}
\begin{remark}
    In the above proposition, $\overline{g} > 0$ as there is no concept of a reflected ordinary differential equation. However, when the noise is $0$, our limiting process yields an ODE.
\end{remark}
To sample with our score function $\vec{s}_\theta$, we simply solve the reversed process, which is
\begin{equation}\label{eqn:app:odesampler}
    \dd \vec{x}_t = \sqbrac{\vec{f}(\vec{x}_t, t) - \frac{g(t)^2}{2} \vec{s}_\theta(\vec{x}_t, t)} \dd t
\end{equation}
for our ODE and
\begin{equation}\label{eqn:app:ddimsampler}
    \dd\vec{x}_t = \sqbrac{\vec{f}(\vec{x}_t, t) - \frac{g(t)^2 + \overline{g}(t)^2}{2} \vec{s}_\theta(\vec{x}_t, t)} + \overline{g}(t) \dd \overline{\vec{B}}_t + \dd \overline{\vec{L}}_t
\end{equation}
for our annealed reflected SDE.

When training with our CSDM objective $\vec{s}_\theta(\vec{x}_t, t)$, the ODE sampler (Equation \ref{eqn:app:odesampler}) to mimic standard reflected diffusion sampling, which includes thresholding. Conversely, when the score is trained with standard score matching, the sampler just removes thresholding, causing the process to simulate the diffusion path without thresholding.

To mimic the thresholding effect, one must instead turn to the the annealed reflected SDE sampler of Equation \ref{eqn:app:ddimsampler}. If we discretize the equation and (with an abuse of notation) set $\overline{g} = 0$, then we recover the thresholded DDIM sampler. Unfortunately, changing $\overline{g}$ necessarily causes the sampled distribution to shift since $\vec{s}_\theta(\vec{x}_t, t)$ is not trained to mimic the correct $\grad_x \log p_t(\vec{x})$, so the reverse process necessarily results in divergent behavior.

\subsection{Girsanov Theorem for Reflected SDEs and Likelihood Evaluation}\label{sec:app:theory:girs}

We derive our likelihood bounds. We first recall Girsanov's Theorem for SDEs \citep{ksendal1987StochasticDE}

\begin{theorem}[Girsanov Theorem]\label{thm:app:girs}
    Let $\Phi$ be a bounded functional on the space of continuous functions $C([0, T])$. For the SDE evolving on $[0, T]$ with

    \begin{equation}
        \dd \vec{X}_t = \mu(t, \vec{X}_t) \dd t + \sigma(t, \vec{X}_t) \dd \vec{B}_t
    \end{equation}

    we have

    \begin{equation}
        \E \Phi(\vec{X}_t) = \E \sqbrac{\Phi(\vec{B}_t) \exp\paren{-\int_0^T \mu(s, \vec{X}_t) \dd\vec{B}_t - \frac{1}{2} \int_0^T \norm{\mu(t, \vec{X}_t)}^2 \dd t}}
    \end{equation}

    where the expectation is taken is the path measure of the SDE.
\end{theorem}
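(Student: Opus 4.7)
The result is the classical Cameron--Martin--Girsanov change-of-measure identity for an It\^o diffusion, and the plan is to deduce it from the standard machinery of exponential martingales. For simplicity I would first reduce to the case where the diffusion coefficient $\sigma$ is the identity by absorbing $\sigma^{-1}$ into the drift on the nondegenerate part; the general case follows by the same argument applied componentwise, assuming $\sigma$ is invertible with suitable bounds. After this reduction the SDE reads $\dd \vec{X}_t = \mu(t, \vec{X}_t)\dd t + \dd \vec{B}_t$.

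First I would introduce the Dol\'eans--Dade exponential
\begin{equation}
    Z_t := \exp\paren{-\int_0^t \mu(s, \vec{X}_s) \cdot \dd \vec{B}_s - \frac{1}{2}\int_0^t \norm{\mu(s, \vec{X}_s)}^2 \dd s},
\end{equation}
and verify by It\^o's formula that $\dd Z_t = -Z_t\, \mu(t, \vec{X}_t) \cdot \dd \vec{B}_t$, so $Z$ is a nonnegative local martingale. Under Novikov's condition $\E\exp\paren{\tfrac{1}{2}\int_0^T \norm{\mu(t, \vec{X}_t)}^2 \dd t} < \infty$, $Z$ is a true martingale with $\E Z_T = 1$, and so an equivalent probability measure $\mathbb{Q}$ on the path space is well-defined by $\dd \mathbb{Q}/\dd \P := Z_T$.

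Next I would apply the standard Girsanov theorem to conclude that, under $\mathbb{Q}$, the shifted process $\widetilde{\vec{B}}_t := \vec{B}_t + \int_0^t \mu(s, \vec{X}_s)\dd s$ is a Brownian motion. Substituting this back into the SDE gives $\dd \vec{X}_t = \dd \widetilde{\vec{B}}_t$, so under $\mathbb{Q}$ the process $\vec{X}$ has the law of a Brownian motion started at $\vec{X}_0$. For any bounded functional $\Phi$ on $C([0,T])$ this yields
\begin{equation}
    \E^\P[\Phi(\vec{X})] = \E^\mathbb{Q}\sqbrac{\Phi(\vec{X}) Z_T^{-1}} = \E^\mathbb{Q}\sqbrac{\Phi(\widetilde{\vec{B}}) Z_T^{-1}},
\end{equation}
which is the stated identity once the right-hand side is rewritten against the Brownian-motion law and the sign convention on the exponent is tracked through.

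The principal technical obstacle is the integrability of $Z_t$: for a generic drift $\mu$, $Z_t$ is only a nonnegative local martingale and may be a strict supermartingale, in which case $\dd \mathbb{Q}/\dd \P := Z_T$ fails to define a probability measure and the identity degenerates to an inequality. A sufficient growth hypothesis on $\mu$, such as Novikov's condition or the weaker Kazamaki condition, is what closes this gap. Everything else is routine stochastic-calculus bookkeeping, and the identity extends from bounded $\Phi$ to broader classes by monotone-class and density arguments.
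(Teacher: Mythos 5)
The paper never actually proves this statement: it is recalled as a classical result from \citet{ksendal1987StochasticDE}, and the paper's own work goes into the reflected extension (Theorem \ref{thm:app:girsrefl}), proved by a penalized-drift approximation that takes this theorem as given. Your proposal therefore supplies the standard textbook argument that the paper delegates to the reference --- Dol\'eans--Dade exponential, Novikov's condition, change of measure, then transfer of the functional identity --- and in substance this is correct. Your warning that $Z$ may be a strict supermartingale without an integrability hypothesis is exactly the right caveat, and it is a hypothesis the paper's informal statement silently omits. Invoking ``the standard Girsanov theorem'' for the shifted process is acceptable here provided you regard the measure-level statement (proved via L\'evy's characterization) as the imported ingredient and the functional identity as the target.

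Two points need tightening. First, your closing claim that the stated identity follows ``once the sign convention on the exponent is tracked through'' hides a genuine mismatch: your (correct) computation yields
\begin{equation*}
    \E_{\P}\sqbrac{\Phi(\vec{X})} = \E_{W}\sqbrac{\Phi(\vec{W}) \exp\paren{\int_0^T \mu(s,\vec{W}_s)\cdot \dd\vec{W}_s - \frac{1}{2}\int_0^T \norm{\mu(s,\vec{W}_s)}^2 \dd s}},
\end{equation*}
where $\vec{W}$ is a Brownian motion under Wiener measure $W$ --- a plus sign, with the drift evaluated along the Brownian path --- whereas the minus-sign exponential with the drift evaluated along $\vec{X}$ is the Radon--Nikodym factor that must multiply $\Phi(\vec{X})$, giving the dual form $\E_{\P}\sqbrac{\Phi(\vec{X})\, Z_T} = \E\sqbrac{\Phi(\vec{B})}$. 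The paper's display attaches that factor to $\Phi(\vec{B})$ instead, which is garbled as written; no sign bookkeeping turns your formula into it, so a self-contained proof should state explicitly which of the two correct forms it establishes. Second, the opening reduction ``absorb $\sigma^{-1}$ into the drift'' does not work as described: for nonconstant $\sigma$, removing the drift by the change of measure leaves the driftless diffusion $\dd\vec{X}_t = \sigma(t,\vec{X}_t)\,\dd\widetilde{\vec{B}}_t$, whose law is not Wiener measure, so the reference expectation must be taken against that diffusion's law and the exponent must use $\sigma^{-1}\mu$; the identity against $\Phi(\vec{B})$ literally requires $\sigma \equiv I$ (or deterministic orthogonal $\sigma$). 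Neither issue affects the paper's downstream use --- Theorem \ref{thm:reflgirsanov} compares two SDEs sharing the same scalar $g(t)$, where only the drift difference enters --- but both should be made precise rather than waved through.
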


We then prove the analogue of this for reflected SDEs:

\begin{theorem}[Girsanov Theorem for Reflected SDEs]\label{thm:app:girsrefl}
    Let $\Phi$ be a bounded functional on the space of continuous functions $C([0, T])$. For the reflected SDE evolving on $\Omega$ space and $[0, T]$ time with

    \begin{equation}
        \dd \vec{X}_t = \mu(t, \vec{X}_t) \dd t + \sigma(t, \vec{X}_t) \dd \vec{B}_t + \dd \vec{L}_t
    \end{equation}

    where $\vec{L}_t$ is assumed to have normal reflection. We have

    \begin{equation}
        \E \Phi(\vec{X}_t) = \E \sqbrac{\Phi(\vec{B}_t) \exp\paren{-\int_0^T \mu(s, \vec{X}_t) \dd\vec{B}_t - \frac{1}{2} \int_0^T \norm{\mu(t, \vec{X}_t)}^2 \dd t}}
    \end{equation}

    where the expectation is taken over the path measure of the reflected SDE.
\end{theorem}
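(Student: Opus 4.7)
The plan is to reduce the statement to the classical Girsanov theorem (Theorem \ref{thm:app:girs}) by exploiting the fact that the local-time term $\vec{L}_t$ is a pathwise functional of $\vec{X}_t$ and the geometry of $\Omega$, and thus is invariant under any change of measure acting on the Brownian filtration. First I would introduce the exponential process
\begin{equation*}
Z_t = \exp\paren{-\int_0^t \mu(s,\vec{X}_s) \dd\vec{B}_s - \frac{1}{2}\int_0^t \norm{\mu(s,\vec{X}_s)}^2 \dd s},
\end{equation*}
and verify under Novikov's condition that $\brac{Z_t}_{t \in [0,T]}$ is a genuine $\P$-martingale. Novikov is essentially trivial here, since $\vec{X}_t$ lives in the compact domain $\Omega$ and $\mu$ is assumed Lipschitz, giving a uniform bound on $\norm{\mu}$. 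This step only uses the semimartingale structure of $\vec{X}_t$ and is completely unaffected by the bounded-variation piece $\vec{L}_t$.

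Next I would define $\tilde{\P}$ by $\dd\tilde{\P}/\dd\P = Z_T$ and apply the classical Girsanov theorem to conclude that $\tilde{\vec{B}}_t := \vec{B}_t + \int_0^t \mu(s,\vec{X}_s)\dd s$ is a standard Brownian motion under $\tilde{\P}$. Substituting back into the reflected SDE yields $\dd\vec{X}_t = \sigma(t,\vec{X}_t)\dd\tilde{\vec{B}}_t + \dd\vec{L}_t$, so under $\tilde{\P}$ the process $\vec{X}_t$ is a driftless reflected diffusion. The crucial observation is that the Skorokhod characterization of $\vec{L}_t$ (it is the unique nondecreasing process in the inward normal direction that grows only on $\brac{s : \vec{X}_s \in \partial\Omega}$) is a purely pathwise condition on $\vec{X}_\cdot$ and $\Omega$, hence is preserved under the equivalent measure change. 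Finally I would rearrange the change-of-measure identity $\E_{\tilde{\P}}\sqbrac{\Phi(\vec{X}_\cdot)} = \E_\P\sqbrac{\Phi(\vec{X}_\cdot) Z_T}$ and identify the law of $\vec{X}_\cdot$ under $\tilde{\P}$ with that of the driftless reflected process, which plays the role of ``$\vec{B}_t$'' on the right-hand side of the stated identity.

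The main obstacle is rigorously justifying that $\vec{L}_t$ is preserved under the measure change, since it is defined implicitly via the Skorokhod problem rather than through a direct semimartingale decomposition driven by $\vec{B}_t$. The resolution is that $\vec{L}_t$ is a measurable functional of the sample path $\vec{X}_\cdot$ and the geometry of $\Omega$, so its joint distribution with $\vec{X}_\cdot$ is fully determined by the law of $\vec{X}_\cdot$ under whichever measure we are working in; no new stochastic calculus is required to propagate it across the Girsanov transform. A secondary interpretational point is that ``$\vec{B}_t$'' on the right-hand side of the theorem should be read as the driftless reflected solution of the SDE (i.e., the $\mu \equiv 0$ case), matching the natural parallel with Theorem \ref{thm:reflgirsanov} in the main text; with this reading the identity follows immediately from the Radon-Nikodym computation above.
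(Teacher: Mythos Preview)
Your approach is correct but genuinely different from the paper's. You argue directly: define the exponential martingale $Z_T$ (Novikov is immediate by compactness of $\Omega$), change measure, and observe that the Skorokhod local-time term $\vec{L}_t$ is a pathwise functional of $\vec{X}_\cdot$ and hence survives any absolutely continuous change of measure unchanged. Under $\tilde{\P}$ the process $\vec{X}$ therefore solves the driftless reflected problem, and the identity follows. The paper instead proceeds by \emph{penalization approximation}: it extends $\mu,\sigma$ smoothly to $\R^d$, replaces the reflected SDE by a family of ordinary SDEs on $\R^d$ with drifts $\mu_i(t,\vec{x}) = \mu(t,\vec{x}) + i\, d(\vec{x},\Omega)\,\vec{v}(\vec{x})$ that push trajectories back toward $\Omega$, applies the classical Girsanov theorem (Theorem~\ref{thm:app:girs}) to each $\vec{X}^i$, and then passes to the limit $i\to\infty$ using the known convergence $\vec{X}^i \to \vec{X}$ in measure together with the fact that $\mu_i = \mu$ on $\Omega$.

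Your route is shorter and makes transparent \emph{why} the bounded-variation piece does not enter the Radon--Nikodym density; it also handles the interpretational issue you flag about ``$\vec{B}_t$'' on the right-hand side cleanly. The paper's route has the pedagogical advantage of reducing everything to the unreflected theorem plus a black-box approximation lemma, so no familiarity with the pathwise Skorokhod map is needed; on the other hand, it must control the limit of the Girsanov exponent as $i\to\infty$, which your argument avoids entirely.
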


\begin{proof}
   We first smoothly extend $\mu$ and $\sigma$ to all of $\R^d$ s.t. the value goes to $0$ very quickly on $\overline{\Omega}$. We then consider the processes $\vec{X}_t^n$ defined $i \in \R^+$ by

    \begin{equation}
        \dd \vec{X}_t^i = \mu_i(t, \vec{X}_t) \dd t + \sigma(t, \vec{X}_t) \dd \vec{B}_t + \dd \vec{L}_t
    \end{equation}

    where $\mu_i(t, x) = \mu(t, x) + i d(x, \Omega) \vec{v}(x)$ where $d$ is the distance function and $\vec{v}(x)$ is the unit normal vector pointing from $x$ to $y$ where $y := \argmin_{z \in \partial \Omega} d(x, z)$. It is well known that $\vec{X}_t^i \to \vec{X}_t$ in measure as $i \to \infty$ \citep{Liu1993NumericalAT}. Since $\Phi$ is a bounded (and thus continuous) functional, we thus have $\E \Phi(\vec{X}_t^i) \to \E \Phi(\vec{X}_t)$ as $i \to \infty$. We finalize by noting that
    \begin{equation}\label{eqn:app:igirs}
        \E \Phi(\vec{X}_t^i) = \E \sqbrac{\Phi(\vec{B}_t) \exp\paren{-\int_0^T \mu_i(s, \vec{X}_t^i) \dd\vec{B}_t - \frac{1}{2} \int_0^T \norm{\mu_i(t, \vec{X}_t^i)}^2 \dd t}}
    \end{equation}
    As $i \to \infty$, $\vec{X}_t^i$ will remain in $\Omega$ w.p. $1$ and $\mu_i = \mu$ on $\Omega$. Therefore, we have the desired convergence
    \begin{equation}
        \E \Phi(\vec{X}_t^i) \to \E \sqbrac{\Phi(\vec{B}_t) \exp\paren{-\int_0^T \mu(s, \vec{X}_t) \dd\vec{B}_t - \frac{1}{2} \int_0^T \norm{\mu(t, \vec{X}_t)}^2 \dd t}}
    \end{equation}
    as desired.
\end{proof}
\begin{corollary}
    As a corollary, when $\Phi$ is $\log \frac{\dd \mu}{\dd \nu}$, this gives us Theorem \ref{thm:reflgirsanov}.
\end{corollary}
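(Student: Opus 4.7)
The plan is to specialize Theorem~\ref{thm:app:girsrefl} to the bounded functional $\Phi := \log \frac{\dd \bm{\mu}}{\dd \bm{\nu}}$. Because $\vec{x}_t$ and $\vec{y}_t$ share the diffusion coefficient $g(t)$, the normal-reflection mechanism at $\partial \Omega$, and the initial datum $\vec{z}$, the only difference between the path measures $\bm{\mu}$ and $\bm{\nu}$ is their drifts; the reflection process $\vec{L}_t$, being a pathwise functional of the trajectory itself, is preserved under the change of measure. Applying Theorem~\ref{thm:app:girsrefl} therefore identifies $\log \frac{\dd \bm{\mu}}{\dd \bm{\nu}}$ with an exponential-martingale expression built from the drift difference $\vec{f}_1 - \vec{f}_2$ and $g$.

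Concretely, I would first write
$$\log \frac{\dd \bm{\mu}}{\dd \bm{\nu}} = \int_0^T h(\vec{x}_t, t) \cdot \dd \vec{B}_t^{\bm{\nu}} - \frac{1}{2}\int_0^T \norm{h(\vec{x}_t, t)}^2 \dd t,$$
where $h$ is the drift-correction field between the two SDEs and $\vec{B}^{\bm{\nu}}$ is the Brownian motion driving $\vec{y}$ under $\bm{\nu}$. Taking $\bm{\mu}$-expectation, I would use the Girsanov shift $\dd \vec{B}^{\bm{\nu}}_t = \dd \tilde{\vec{B}}_t + h\, \dd t$ (with $\tilde{\vec{B}}$ a $\bm{\mu}$-Brownian motion), so that the first integral decomposes as $\int h \cdot \dd \tilde{\vec{B}}_t + \int \norm{h}^2 \dd t$. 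The Itô integral against $\tilde{\vec{B}}$ has zero $\bm{\mu}$-expectation, and the remaining $\int \norm{h}^2 \dd t$ combines with $-\frac{1}{2}\int \norm{h}^2 \dd t$ to leave $\E_{\bm{\mu}}\log \frac{\dd \bm{\mu}}{\dd \bm{\nu}} = \frac{1}{2}\int_0^T \E_{\bm{\mu}} \norm{h(\vec{x}_t, t)}^2 \dd t$. A Fubini exchange rewrites this in terms of the marginal $p_{\vec{x}_t}$, giving the stated quadratic formula in $\vec{f}_1 - \vec{f}_2$ and $g$.

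The main subtlety is confirming that Girsanov is valid in the presence of $\vec{L}_t$, so that the Radon-Nikodym exponent depends only on the drift difference and not on the reflection process. This is precisely what Theorem~\ref{thm:app:girsrefl} supplies: its penalization proof approximates each reflected SDE by standard (unconstrained) SDEs with strong inward-pointing penalties near $\partial \Omega$, for which ordinary Girsanov applies and yields a Radon-Nikodym derivative whose exponent involves only the original drift, since the penalty drifts are identical for the two SDEs and cancel in the difference. Passing to the penalization limit $i \to \infty$ preserves both the exponential-martingale formula and the shift identity for $\vec{B}^{\bm{\nu}}$, after which the expectation computation above goes through unchanged.
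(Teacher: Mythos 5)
Your proposal is correct and is essentially the paper's own argument: the corollary is stated there without proof, the intended content being exactly the specialization you carry out---Theorem~\ref{thm:app:girsrefl} (whose penalization proof, as you correctly note, makes the common penalty drifts cancel, so the Radon--Nikodym derivative between the two reflected path measures depends only on the drift difference $\vec{f}_1-\vec{f}_2$) followed by the standard exponential-martingale and Girsanov-shift computation for the KL divergence. One small point: carried out exactly, your computation yields the weight $\norm{(\vec{f}_1-\vec{f}_2)(\vec{y},t)}^2/g(t)^2$ rather than the $g(t)^2\norm{(\vec{f}_1-\vec{f}_2)(\vec{y},t)}^2$ printed in Theorem~\ref{thm:reflgirsanov}; the printed weight appears to be a typo, since the application in Section~\ref{sec:likelihood} substitutes $\vec{f}_1-\vec{f}_2 = g^2(\vec{s}_\theta-\grad_x\log p_t)$ and recovers the $g^2$-weighted score-matching loss, which is consistent with your version.
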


\begin{remark}
    It is possible that we can generalize our theorem to obliquely reflected SDEs, although we did not pursue this line of inquiry.
\end{remark}

\begin{remark}
    Theorem \ref{thm:reflgirsanov} recovers the denoising score matching loss if we slice an initial $\delta_x$ distribution. In particular, this is the continuous time ``diffusion loss" $\mathcal{L}_T$ that is used to the form the ELBO for standard diffusion models  \citep{Kingma2021VariationalDM, Ho2020DenoisingDP}.
\end{remark}

\subsection{Thresholding}\label{sec:app:theory:thresh}

On $[-1, 1]^D$, the dynamic thresholding operator is defined by
\begin{equation}
    \mathrm{dynthresh}_p(\vec{x}) = (\proj_{[-1, 1]}(x_i / \max(s, 1))) \quad s \text{ is the } p\text{-th percentile of } |x_i|
\end{equation}
which of course can be scaled to $[0, 1]^D$ (for our setup).
\begin{proposition}[Static Thresholding Solves the Reflected SDE]
    On domains $\Omega$ between times $[0, T]$, the discretization
    \begin{equation}
        x_{t + \Delta t} = \proj(x_t + \vec{f}(x_t, t) \Delta t) + g(t) \vec{B}_{\Delta t}
    \end{equation}
    solves the reflected SDE
    \begin{equation}
        \dd X_t = \vec{f}(X_t, t) + g(t) \dd \vec{B}_t + \vec{n} \dd \vec{L}_t
    \end{equation}
    as $\Delta t \to 0$ when $\vec{f}$ and $g$ are uniformly Lipschitz in time and space and satisfy the linear growth condition for any Lipschitz extension of $\vec{f}$ to the general space $\R^d$.
\end{proposition}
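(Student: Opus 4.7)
The plan is a tightness-and-identification argument: extract a weakly convergent subsequence of the discretized trajectories, show any limit satisfies the reflected SDE, then invoke strong uniqueness (\citet[Thm.\ 2.5.4]{Pilipenko2014AnIT}) to upgrade subsequential to full convergence in law.

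First I would embed the scheme in continuous time. Set $t_k = k\Delta t$, let $y_{k+1} := x_{t_k} + \vec{f}(x_{t_k}, t_k)\Delta t$, so $x_{t_{k+1}} = \proj(y_{k+1}) + g(t_k)(\vec{B}_{t_{k+1}}-\vec{B}_{t_k})$; define the reflection increment $\Delta K^{\Delta t}_k := y_{k+1} - \proj(y_{k+1})$, which by the variational characterization of the metric projection lies in the outward normal cone at $\proj(y_{k+1})$. Piecewise-linear interpolation yields continuous processes $(X^{\Delta t}_t, K^{\Delta t}_t)$ satisfying
\begin{equation*}
    X^{\Delta t}_t = X_0 + \int_0^t \vec{f}(X^{\Delta t}_{\tau(s)}, \tau(s))\dd s + \int_0^t g(\tau(s))\dd \vec{B}_s - K^{\Delta t}_t,
\end{equation*}
where $\tau(s)$ rounds $s$ down to the grid.

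Next I would prove tightness of $\{(X^{\Delta t}, K^{\Delta t})\}$ on $C([0,T];\R^d)^2$. Linear growth of $\vec{f}$ and Gr\"onwall give uniform $L^2$ bounds on $\sup_{t\le T}\norm{X^{\Delta t}_t}$. The critical input is a uniform total-variation bound $\E|K^{\Delta t}|_{[0,T]}\le C$, obtained from compactness of $\Omega$ plus standard projection-scheme estimates (\`a la Liu, S\l ominski): each nonzero $\Delta K^{\Delta t}_k$ is bounded by the drift and Brownian increments over that step. Combined with Brownian modulus-of-continuity estimates, this yields tightness on path space.

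The hardest step will be identifying the limit. By Prokhorov and Skorokhod representation I pass to a common probability space with a.s.\ uniform convergence along a subsequence. Closedness of $\Omega$ gives $X_t \in \Omega$, and passing to the limit in the scheme identity yields
\begin{equation*}
    X_t = X_0 + \int_0^t \vec{f}(X_s,s)\dd s + \int_0^t g(s)\dd \vec{B}_s - K_t.
\end{equation*}
Two things remain: $K$ grows only on $\{t: X_t \in \partial \Omega\}$, and its increments are directed along the inward normal. The support property follows because $|\Delta K^{\Delta t}_k|=0$ unless $y_{k+1}\notin \Omega$, which forces $\proj(y_{k+1})\in\partial\Omega$ and hence $X_s\in\partial\Omega$ in the limit. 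The directional property is the main obstacle: $\Delta K^{\Delta t}_k$ lies in the normal cone at $\proj(y_{k+1})$, not at the putative limit point $X_s$, and as $\Delta t \to 0$ one must control the difference. Here the uniform exterior sphere condition (invoked in Section \ref{sec:method:rsde}) is essential: it guarantees an exterior ball of fixed radius $r>0$ tangent to $\partial \Omega$ at each boundary point, making $z\mapsto \vec{n}(z)$ well-defined and continuous in a tubular neighborhood, so normalized increments converge to $\vec{n}(X_s)$. Writing $K_t = \int_0^t \vec{n}(X_s)\dd L_s$ for a nondecreasing scalar $L_t$ identifies the limit as a solution of the reflected SDE, and strong uniqueness promotes subsequential convergence to convergence of the whole sequence.
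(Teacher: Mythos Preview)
Your proposal is a correct tightness-and-identification argument of the type that underlies the classical results cited (\citet{Skorokhod1961StochasticEF,Liu1993NumericalAT}), but it takes a substantially more laborious route than the paper does. The paper does not reprove convergence of a projection scheme from scratch; it simply observes that the scheme here differs from the \emph{standard} projection scheme $y_{t+\Delta t}=\proj(y_t+\vec{f}(y_t,t)\Delta t+g(t)\vec{B}_{\Delta t})$ only in that the noise is added \emph{after} projection rather than before. Convergence of the standard scheme to the reflected SDE is already in the cited references, so the only work is to show that swapping the order of projection and noise is harmless as $\Delta t\to 0$. That follows because the post-projection noise increment is $O(\sqrt{\Delta t})$, so the iterates stay within $O(\sqrt{\Delta t})$ of $\Omega$, and the assumed Lipschitz extension of $\vec{f}$ to $\R^d$ keeps the resulting drift error controlled. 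Your approach buys self-containment (you never need to open the cited black boxes), while the paper's buys brevity by leveraging them.

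One point in your write-up deserves explicit attention and is in fact the crux the paper isolates: your iterates $x_{t_k}=\proj(y_k)+g(t_{k-1})\vec{B}_{\Delta t}$ are \emph{not} in $\Omega$, so the line ``closedness of $\Omega$ gives $X_t\in\Omega$'' is not immediate---you must first argue $d(X^{\Delta t}_t,\Omega)\to 0$ uniformly. Likewise, writing $\vec{f}(X^{\Delta t}_{\tau(s)},\tau(s))$ in the integral identity silently uses the Lipschitz extension of $\vec{f}$ off $\Omega$, which is exactly why that hypothesis appears in the statement. Both are easy to patch (the distance is $O(\sqrt{\Delta t})$, the drift error is then $O(\sqrt{\Delta t})$ by Lipschitz), but they are the content of the proposition relative to the already-known result, so they should be made explicit rather than absorbed into the tightness machinery.
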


\begin{proof}
    This closely mirrors the proof showing that the standard projection scheme
    \begin{equation}
        y_{t + \Delta t} = \proj(y_t + \vec{f}(x_t, t) \Delta t) + g(t) \vec{B}_{\Delta t}
    \end{equation}
    converges to the solution of the reflected SDE as $\Delta t \to 0$ \citep{Skorokhod1961StochasticEF, Schuss2013BrownianDA, Liu1993NumericalAT}. The key difference is that the process is not supported on $\Omega$ since the projection happens after each. However, since our extension on $\vec{f}$ is Lipschitz, this error is well behaved and disappears as $\Delta t \to 0$.
\end{proof}

\begin{corollary}[Dynamic Thresholding Solves the Reflected SDE]
    With the same conditions as given above, on $[-1, 1]^D$, the discretization
    \begin{equation}
        \vec{x}_{t + \Delta t} = \mathrm{dynthresh}_p(\vec{x}_t + \vec{f}(\vec{x}_t, t) \Delta t) + g(t) \vec{B}_{\Delta t}
    \end{equation}
    converges to the solution of the reflected SDE when $\vec{f}(\vec{x}_t, t)$ does not point outside of $[-1, 1]^D$ on $\ge (1 - p)D$ dimensions.
\end{corollary}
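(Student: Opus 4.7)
My plan is to reduce the dynamic thresholding discretization to the static thresholding one under the stated hypothesis, then invoke the preceding proposition. The whole argument hinges on showing that the rescaling factor $\max(s,1)$ appearing in the definition of $\mathrm{dynthresh}_p$ is essentially $1$: once this is established, the two schemes agree in the limit, and the static case handles the rest.

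First I would control $s$. Since the previous iterate was produced by $\mathrm{dynthresh}_p$ (whose output lies in $[-1,1]^D$) followed by Gaussian noise, $\vec{x}_t$ sits in $[-1,1]^D$ up to a coordinate-wise perturbation of size $O(\sqrt{\Delta t})$. For each of the (at least) $(1-p)D$ coordinates on which $\vec{f}(\vec{x}_t,t)$ does not point outward, the post-drift value then satisfies $|x_i^{\mathrm{pre}}| \le 1 + O(\sqrt{\Delta t})$, since along such coordinates the drift can only reduce or preserve distance to $[-1,1]$ up to an $O(\Delta t)$ term. Interpreting the $p$-th percentile in the sense consistent with dynamic thresholding (so that having at least $(1-p)D$ coordinates with values $\le 1 + O(\sqrt{\Delta t})$ caps the percentile at the same level), this forces $s \le 1 + O(\sqrt{\Delta t})$ and hence $\max(s,1) = 1 + O(\sqrt{\Delta t})$.

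Next I would couple the dynamic and static discretizations driven by the same Brownian increments and the same initial condition. The coordinate-wise difference between $\mathrm{dynthresh}_p(\vec{x}^{\mathrm{pre}})$ and $\proj_{[-1,1]^D}(\vec{x}^{\mathrm{pre}})$ is bounded by $|1 - 1/\max(s,1)|\cdot|x_i^{\mathrm{pre}}| = O(\sqrt{\Delta t})$, so using the Lipschitz assumptions on $\vec{f}$ and $g$ together with a discrete Gronwall estimate gives uniform-in-$[0,T]$ closeness of the two coupled trajectories as $\Delta t \to 0$. The preceding proposition identifies the limit of the static scheme as the reflected SDE, so the dynamic scheme inherits the same limit.

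The main obstacle is controlling the cumulative effect of the per-step multiplicative shrinkage: naively compounding a factor $(1 - O(\sqrt{\Delta t}))$ over $O(1/\Delta t)$ steps blows up. The resolution is that $\max(s,1)$ strictly exceeds $1$ only when the iterate is genuinely pushing against the boundary, and in that regime the shrinkage plays precisely the role of the normal reflection $\dd \overline{\vec{L}}_t$ in the reflected SDE, with effective per-step displacement of order $\Delta t$ rather than $\sqrt{\Delta t}$. Formalizing this amounts to repeating the Skorokhod-problem convergence analysis of \citet{Liu1993NumericalAT} with $\mathrm{dynthresh}_p$ in place of $\proj$, using the above bound on $s$ as the only new ingredient.
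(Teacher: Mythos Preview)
Your approach is essentially the same as the paper's: reduce $\mathrm{dynthresh}_p$ to the projection operator by showing that the $p$-th percentile $s$ satisfies $\max(s,1)=1$ under the hypothesis, then invoke the static-thresholding proposition. The paper's proof is a two-sentence observation (``under our conditions, $\mathrm{dynthresh}_p$ becomes the projection operator since the $p$-th percentile of $|x_i|$ will always be $1$''), whereas you go further and explicitly track the $O(\sqrt{\Delta t})$ discrepancy coming from the post-threshold Gaussian noise and sketch a coupling/Gronwall argument to absorb it---this extra care is not in the paper but is consistent with its intent (cf.\ the paper's own remark that ``as $\Delta t \to 0$, $\mathrm{dynthresh}_p$ tends to behave exactly like $\proj$'').
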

\begin{proof}
    Under our conditions, $\mathrm{dynthresh}_p$ becomes the projection operator since the $p$-th percentile of $|x_i|$ will always be $1$. This replicates the above proposition.
\end{proof}
\begin{remark}
    In practice, we found that learned score networks $\vec{f}$ satisfy the ``pointing inside" condition above. In particular, as $\Delta t \to 0$, $\mathrm{dynthresh}_p$ tends to behave exactly like $\proj$ for all $p < 1$.
\end{remark}
\section{Practical Implementation}

\subsection{Exact Equations for Reflected Brownian Transition Probabilities}

For $[0, 1]$, the reflected transition probability for a source $\vec{x}$ with diffusion value $\sigma$ (which correspond to the mean and standard deviation for the standard normal distribution) is given by
\begin{equation}
    p_{\mathcal{R}(x, \sigma^2)}(y) = \sum_{z: y + z \in \Z} p_{\mathcal{N}(x, \sigma)}(z) = 1 + 2\sum_{k = 1}^\infty e^{-k^2 \pi^2 \sigma^2/2} \cos(k\pi x) \cos(k\pi y)
\end{equation}
Note that this means that the eigenfunctions of $[0, 1]$ under our Neumann boundary condition are $1$ and $\cos(k\pi x)$, with eigenvalues of $0$ and $\pi k^2$

\subsection{Mapping $\mathbf{\Omega}$ to $\mathbf{[0, 1]^d}$}\label{sec:app:prac:diffmap}

Our domain $\Omega$ has an interior which maps bijectively to $(0, 1)^d$ iff $\Omega$ is simply connected. Note that this encompasses a wide variety of domains, notably convex sets.

To construct a map $f: [0, 1]^d \to \overline{\Delta}_t$, we use a variant of the common stick breaking procedure:

\begin{equation}
    (f(\vec{x}))_i = x_i \prod_{j = i + 1}^d (1 - x_i)
\end{equation}

which admits an inverse
\begin{equation}
    (f^{-1}(\vec{y}))_i = \frac{y_i}{1 - \sum_{j = i + 1}^d y_i}
\end{equation}

\subsection{Denoising The Final Proability Distribution} \label{sec:app:prac:denoise}

We note that Tweedies' formula \citep{Efron2011TweediesFA} does not hold for general bounded domains $\Omega$. We show this for $[0, 1]$: given an initial distribution $X$, a perturbed distribution $Y$ constructed by $y \sim \mathcal{R}(x, \sigma^2)$, where $x \sim X$ has a Tweedie denoiser:
\begin{align}
    \E[x | y] &= \int_0^1 x p(x | y) dx\\
    &= \int_0^1 x \frac{p(y | x)p_X(x)}{p_Y(y)} dx\\
    &= \int_0^1 x \frac{p_{\mathcal{R}(x, \sigma^2)}(y) p_X (x)}{p_Y(y)} dy\\
    &\neq u + \sigma^2 \frac{d}{dy} \log p_Y(y)
\end{align}
The reason why this works in the standard case is because the score of the Gaussian distribution is $\frac{y - x}{\sigma}$, which allows us to extract out the desired value. For reflected Gaussians, this does not hold.

Instead, for our experimental results on CIFAR-10, we denoise by training a denoising autoencoder \citep{Vincent2008ExtractingAC} trained on our reflected Gaussian noise. This follows the same architecture as our score network, and is trained to predict the noise (so that subtracting it recovers the initial sample). In general, this is required to get a decent FID score, but makes little difference in terms of perceptual quality as our images are accurate to within a $2.5$ standard deviation noise to begin with.
\section{Experimental Setup}

\subsection{Image Generation}\label{sec:app:experiment:imgqual}

We exactly follow \citet{Song2020ScoreBasedGM} for both models and training hyperparameters. The only differences are that we set $\sigma_1 = 5$ instead of $50$ (for the VE SDE) since we mix well with $\sigma_1 = 5$ while VE SDE needs a much larger $\sigma_1$ to mix. Furthermore, we use the deep DDPM++ architecture, but we rescale the output $\frac{1}{\sigma}$ as is done for the NCSN++ architecture (for VE SDE).

For sampling, we sample with $1000$ predictor (Reflected Euler-Maruyama) steps with $1000$ corrector (Reflected Langevin) steps \citep{Song2020ScoreBasedGM}. We use a signal-to-noise ratio of $0.03$.

\subsection{Image Likelihood}\label{sec:app:experiment:imglike}

We almost exactly follow \citet{Kingma2021VariationalDM} for both models and training hyperparameters, replacing the standard diffusion with our reflected diffusion. We do not train with the noise schedule, instead setting $\sigma_0 = 10^{-4}$ and $\sigma_1 = 5$, which causes the reconstruction and prior losses to be (numerically) $0$.

\subsection{Guided Diffusion}\label{sec:app:experiment:guided}

We exactly follow \citet{Ho2022ClassifierFreeDG} and train using the ADM architecture \citep{Dhariwal2021DiffusionMB} with the same parameters (for standard diffusion). For reflected diffusion, we train with $\sigma_0 = 0.01$ and $\sigma_1 = 5$, following our CIFAR-10 experiments. Furthermore, we scale the output by $1/\sigma$ as the neural network outputs the noise vector and not the score.

For the classifier-free guidance baseline, we retrain a ImageNet64 model following \citet{Ho2022ClassifierFreeDG}. For the classifier guided basleine, we use the pretrained models from \citet{Dhariwal2021DiffusionMB}.

We sample using $1000$ steps each. For our diffusion model, we use reflected Euler Maruyama. For the standard model, we use a standard Euler-Maruyama with thresholding after each step. For ODE sampling, we sample using a RK45 solver \citep{Dormand1980AFO}.

\subsection{Simplex Diffusion}\label{sec:app:experiment:simplex}

We consider class probabilities outputted from the Inceptionv3 ImageNet classifier \citep{Szegedy2015RethinkingTI}. In particular, if $\{X_i\}$ is a set of images and $f$ is a classifier that outputs a $1000$-dimensional vector of class probabilities, then we learn a distribution over $\{L_i := f(X_i)\}$. For learning purposes, we clip this to a value in $\overline{\Delta}_{999}$ and apply the transformation given in Appendix \ref{sec:app:prac:diffmap}.

Our model is a simple MLP autoencoder with $4$ intermediate layers of width $512$. We use the Swish activation \citep{Ramachandran2017SwishAS} and apply LayerNorm \citep{Ba2016LayerN}. We train with Adam \citep{Kingma2014AdamAM} at a $2 \cdot 10^{-4}$ learning rate. We apply an exponential moving average with a rate of $0.9999$ before evaluating/generating our data. We visualize our full training and eval graphs below, as well as some samples taken from our model. Overall, we seem to be able to match the distribution reasonably well.
\begin{figure}[H]
    \centering
    \begin{subfigure}[b]{0.48\textwidth}
        \includegraphics[width=\textwidth]{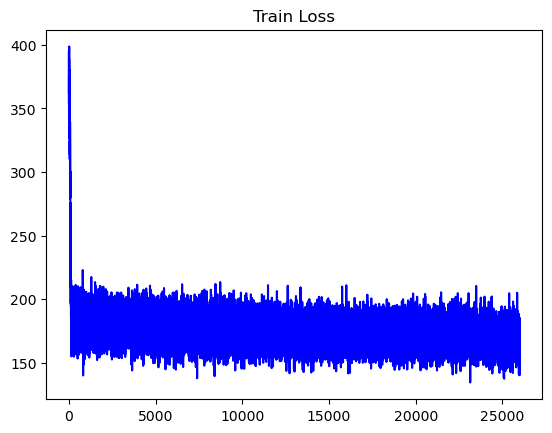}
    \end{subfigure}
    \hfill
    \begin{subfigure}[b]{0.48\textwidth}
        \includegraphics[width=\textwidth]{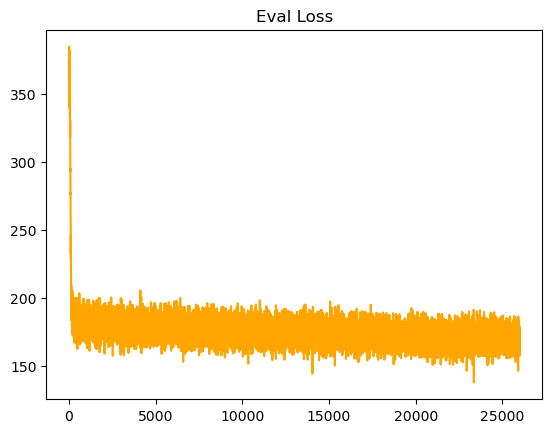}
    \end{subfigure}
    \caption{Training Dynamics for Simplex Diffusion}
\end{figure}
To ensure that we are able to generate data, we generated 10000 and compare the generated histograms of the (most likely) classes. Results are shown below:
\begin{figure}[H]
    \centering
    \begin{subfigure}[b]{0.48\textwidth}
        \includegraphics[width=\textwidth]{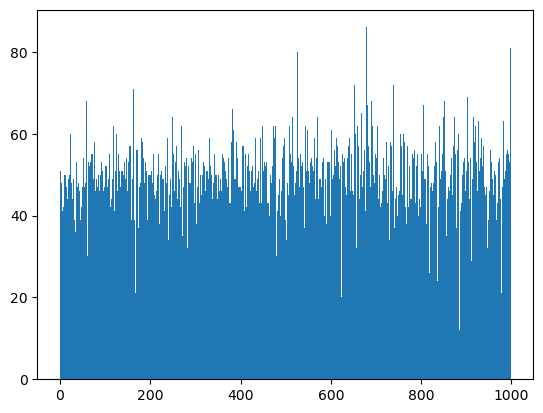}
        \caption{Ground Truth}
    \end{subfigure}
    \hfill
    \begin{subfigure}[b]{0.48\textwidth}
        \includegraphics[width=\textwidth]{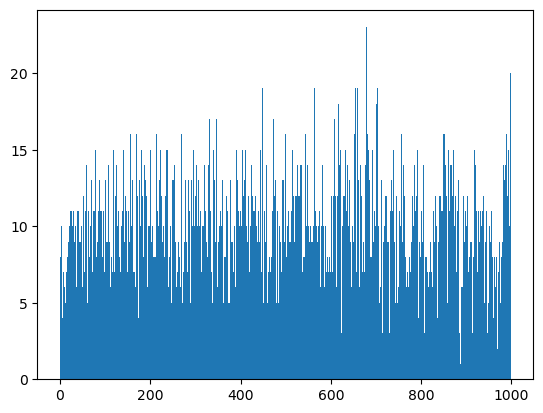}
        \caption{Generated}
    \end{subfigure}
    \caption{Generated Simplex Probabilities for Simplex Diffusion}
\end{figure}

\newpage

\section{Additional Generated Images}\label{sec:app:fig}

\begin{figure}[H]
    \centering
    \includegraphics[width=\textwidth]{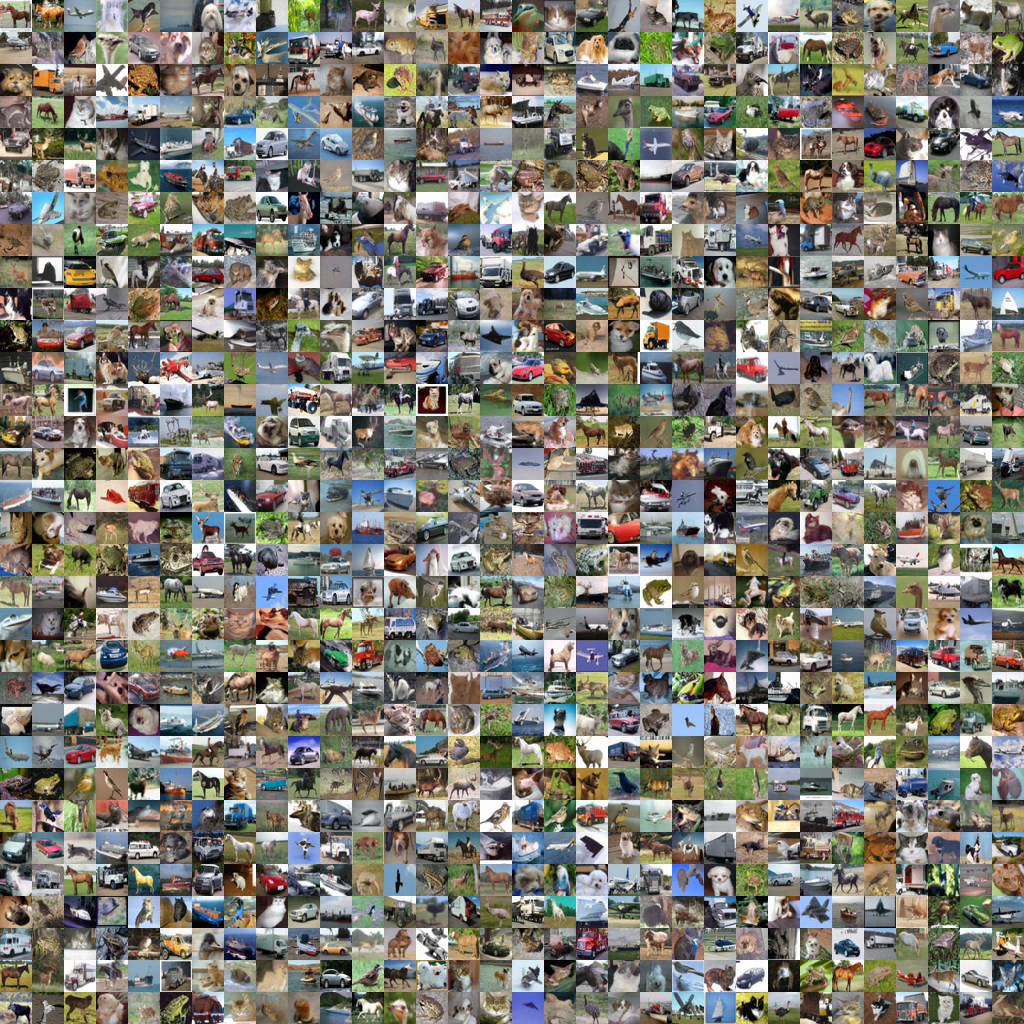}
    \caption{CIFAR-10 Generated Images.}
\end{figure}

\begin{figure}[H]
    \centering
    \includegraphics[width=\textwidth]{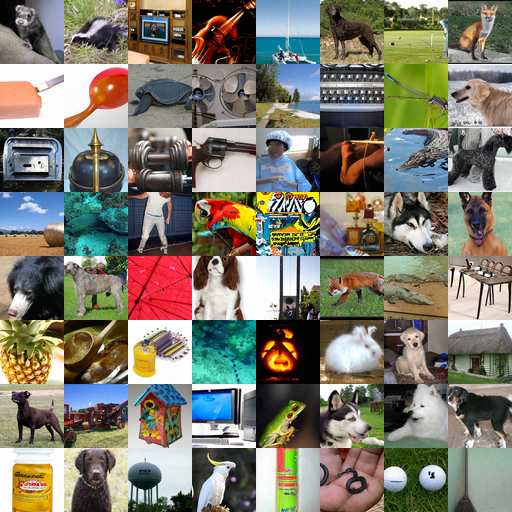}
    \caption{ImageNet64 $w=1$ classifier-free guided samples.}
\end{figure}

\begin{figure}[H]
    \centering
    \includegraphics[width=\textwidth]{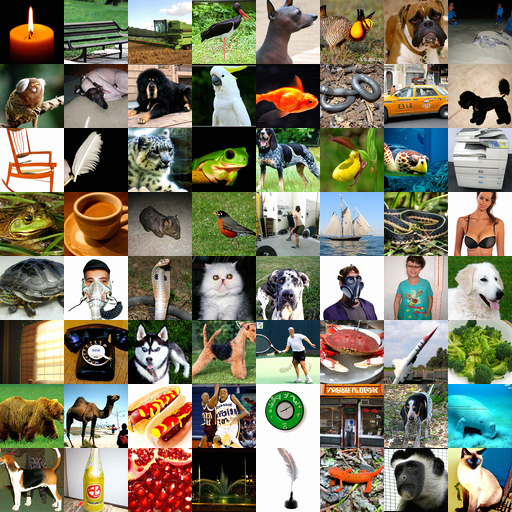}
    \caption{ImageNet64 $w=2.5$ classifier-free guided samples.}
\end{figure}

\begin{figure}[H]
    \centering
    \includegraphics[width=\textwidth]{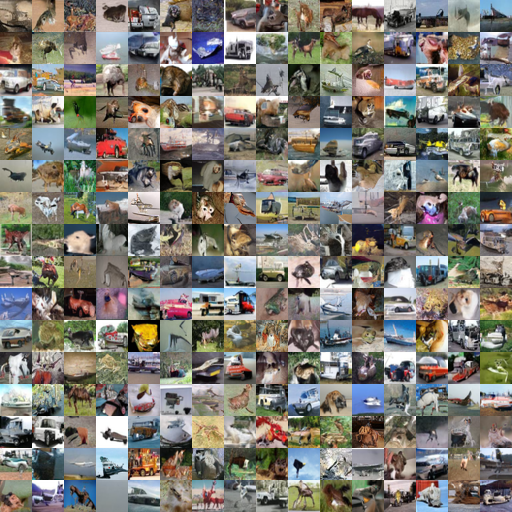}
    \caption{CIFAR-10 Generated Images (trained for BPD).}
\end{figure}

\begin{figure}[H]
    \centering
    \includegraphics[width=\textwidth]{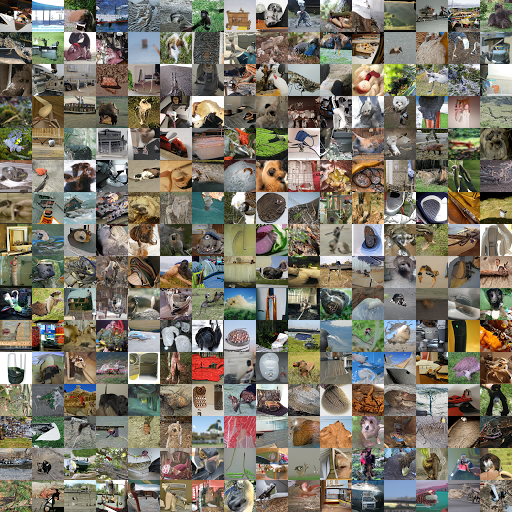}
    \caption{ImageNet32 Generated Images (trained for BPD).}
\end{figure}

\begin{figure}[H]
    \centering
    \includegraphics[width=0.55\textwidth]{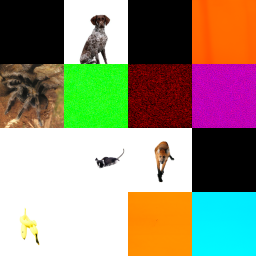}
    \caption{$w=1$ baseline classifier guided images without thresholding. We sample from the pretrained model from \citet{Dhariwal2021DiffusionMB}. Around $75$\% of samples diverge, while most of the rest have noticeable artifacts such as a glaring white background.}\label{fig:cl_guidance_noclip}
\end{figure}

\begin{figure}[H]
    \centering
    \includegraphics[width=0.55\textwidth]{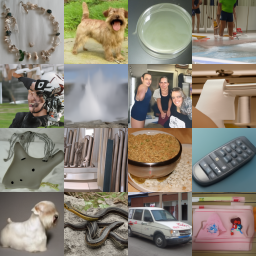}
    \caption{$w=1$ baseline classifier guided images without thresholding and DDIM sampling. Interestingly, these samples don't diverge.}\label{fig:cg_ddim}
\end{figure}

\begin{figure}[H]
    \centering
    \includegraphics[width=0.55\textwidth]{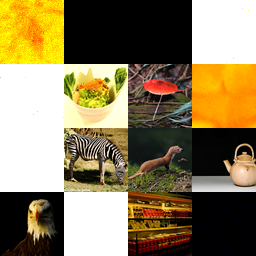}
    \caption{$w=1$ baseline classifier-free guided images sampled with DDIM without thresholding. This corresponds to ODE sampling.}
    \label{fig:app:ode_cf_baseline}
\end{figure}

\begin{figure}[H]
    \centering
    \includegraphics[width=0.55\textwidth]{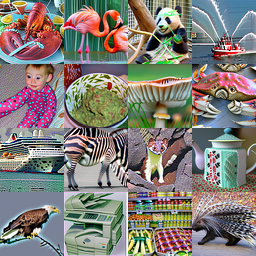}
    \caption{Dynamically thresholded images, matches Figure \ref{fig:diffguide}.}
\end{figure}

\begin{figure}[H]
    \centering
    \includegraphics[width=0.55\textwidth]{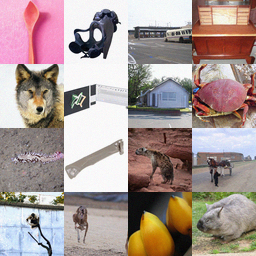}
    \caption{$w=0.5$ ODE samples, Reflected Diffusion.}
    \label{fig:app:0.5ode}
\end{figure}

\begin{figure}[H]
    \centering
    \includegraphics[width=0.7\textwidth]{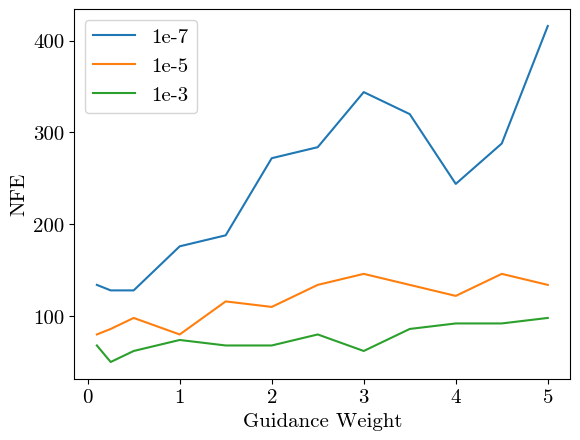}
    \caption{ODE number of forward evaluations (NFE) vs guidance weight for Reflected Diffusion. Increasing the guidance weight tends to increase the number of forward evaluations, but this is still relatively low.}
    \label{fig:app:ode_nfe}
\end{figure}

\end{document}